%% file: main.tex
\documentclass{article}

\usepackage{microtype}
\usepackage{graphicx}
\usepackage{subcaption}
\usepackage{booktabs}
\usepackage{hyperref}
\usepackage{xurl}

\usepackage[preprint]{icml2026}

\usepackage{amsmath}
\usepackage{amssymb}
\usepackage{mathtools}
\usepackage{amsthm}
\usepackage[capitalize,noabbrev]{cleveref}
\usepackage{xcolor}

\input{macros}

\icmltitlerunning{A Sign Law for Entropy Drift under Sampler Mismatch in RLVR}

\begin{document}

\twocolumn[
  \icmltitle{A Sign Law for Entropy Drift under Sampler Mismatch in RLVR}

  \begin{icmlauthorlist}
    \icmlauthor{Zhiyuan Wang}{}
    \icmlauthor{Jacob Steeves}{}
    \icmlauthor{Chang Liu}{}
  \end{icmlauthorlist}
  % The review version (no [preprint]/[accepted] option) hides author information automatically.
  \icmlcorrespondingauthor{Zhiyuan Wang}{\mbox{zhiyuanalwang@gmail.com}}
  \icmlcorrespondingauthor{Jacob Steeves}{\mbox{jake@bittensor.com}}
  \icmlcorrespondingauthor{Chang Liu}{\mbox{liuchang.contact@gmail.com}}

  \icmlkeywords{reinforcement learning, RLVR, entropy, policy gradient, sampling}
  \vskip 0.3in
]

% No affiliations are listed: print the correspondence line without the empty affiliation list.
\makeatletter
\renewcommand{\printAffiliationsAndNotice}[1]{\global\icml@noticeprintedtrue%
  {\let\thefootnote\relax\footnotetext{\hspace*{-\footnotesep}\ificmlshowauthors #1\fi%
      {\raggedright Correspondence to: \icmlcorrespondingauthor@text.\par}
      \ \\
      \Notice@String
    }
  }
}
\makeatother
\printAffiliationsAndNotice{}

\input{sections/abstract}
\input{sections/introduction}
\input{sections/setup}
\input{sections/theory}
\input{sections/experiments}
\input{sections/practice}
\input{sections/related}
\input{sections/conclusion}

\section*{Impact Statement}
This paper presents work whose goal is to advance the field of machine learning.
It identifies an easily missed source of instability in reinforcement learning for language models and gives practitioners a diagnostic and corrections that can make training runs more reliable and reduce compute wasted on failed runs.
We do not foresee negative societal consequences specific to this work beyond those of the field.

\bibliography{refs}
\bibliographystyle{icml2026}

\newpage
\appendix
\onecolumn
\input{sections/appendix}

\end{document}

%% file: macros.tex
\newcommand{\pol}{\pi}                 % policy
\newcommand{\smp}{\mu}                 % sampler
\newcommand{\net}{N}                   % token-weighted net advantage
\newcommand{\gap}{\delta}              % sharpness gap
\newcommand{\KL}{\mathrm{KL}}
\newcommand{\E}{\mathbb{E}}
\newcommand{\Cov}{\mathrm{Cov}}
\newcommand{\keep}{S}                  % kept (sampled) support
\newcommand{\MASK}{\textup{\textsc{Mask}}}
\newcommand{\FAIL}{\textup{\textsc{Fail}}}
\newcommand{\NZ}{\textup{\textsc{NZ}}}
\newcommand{\PEN}{\textup{\textsc{Pen}}}
\newcommand{\GW}{\textup{\textsc{GapW}}}
\theoremstyle{plain}
\newtheorem{theorem}{Theorem}
\newtheorem{proposition}{Proposition}
\newtheorem{lemma}{Lemma}

\theoremstyle{definition}
\newtheorem{definition}{Definition}
\theoremstyle{remark}

%% file: sections/abstract.tex
\begin{abstract}
In reinforcement learning with verifiable rewards (RLVR), whether entropy rises or falls can depend on choices that matter little under exact sampling, such as whether over-length responses are masked or counted as failures.
We trace this to \emph{sampler mismatch}: rollouts are drawn with top-$k$/top-$p$ truncation or at a temperature the learner does not apply.
We prove that a constant advantage shift then changes the entropy of a softmax policy, at first order and in expectation, with a sign fixed by the sampler.
Positive shifts sharpen the policy under any sampler whose likelihood ratio to the policy increases with token probability, including top-$k$, top-$p$, min-$p$, and temperatures below one, and flatten it under flatter samplers.
In group-based RLVR with token-level losses, the baseline-dependent part of this drift is set by the \emph{token-weighted net advantage}, which masking and failing can push to opposite signs.
The rest comes from a covariance between advantages and the sampler's sharpness gap and feeds back on itself: in a two-level model, a push away from a sharper sampler takes off in finite time, while a pull toward it saturates.
On a 2B model, the entropy slope is linear in shifts of the net advantage under truncated sampling and indistinguishable from flat under exact sampling.
Token-level importance ratios leave 98\% of the drift, and four clipped minibatch updates per batch amplify it sevenfold.
In 100-step runs, a net-zero baseline only delays an entropy explosion, and entropy raised at the start keeps rising only under truncation; cancelling the first-order drift at every step keeps entropy bounded without changing the learner.
On a 9B reasoning model forked from an uncorrected run, zeroing the net advantage or renormalizing the learner raises held-out accuracy after ten steps by 8.2 and 8.8 points over the uncorrected rule, with no difference under exact sampling.
\end{abstract}

%% file: sections/introduction.tex
\section{Introduction}
\label{sec:intro}

\begin{figure*}[t]
\centering
\includegraphics[width=\textwidth]{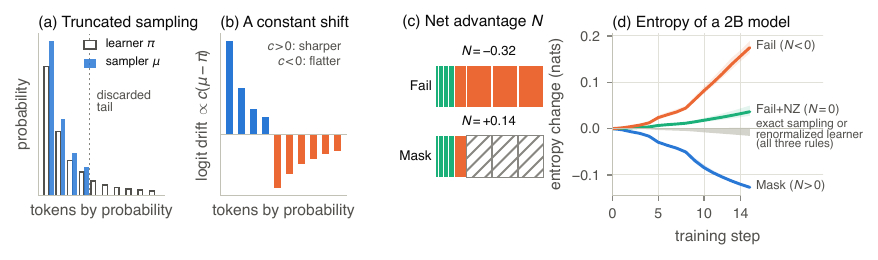}
\vskip -0.05in
\caption{\textbf{(a)} A truncated sampler $\smp$ renormalizes the learner's policy $\pol$ over the most probable tokens, so \textbf{(b)} a constant advantage shift $c$ moves the logits by $\eta c(\smp-\pol)$ in expectation, sharpening the policy for $c>0$ and flattening it for $c<0$. \textbf{(c)} In an illustrative group (width: tokens; hatched: masked but counted in the baseline), the shift a token-level loss applies is the net advantage $\net$. \textbf{(d)} Entropy change of a 2B model under truncated sampling (3 seeds); with exact sampling or a renormalized learner all rules stay in the gray band.}
\label{fig:teaser}
\vskip -0.1in
\end{figure*}

Policy entropy is one of the signals practitioners watch most closely in reinforcement learning with verifiable rewards (RLVR), and it can go wrong in either direction.
When it collapses, the policy stops exploring and accuracy plateaus~\citep{cui2025entropy,yu2025dapo}.
When it explodes, responses degrade and training can diverge~\citep{mai2026thinking,trl7627}.
Which way it moves often depends on choices that seem unrelated to entropy.
In the runs of \cref{fig:teaser}d, which differ only in how they treat over-length responses, masking these responses drives entropy down and counting them as failures drives it up.
With exact sampling, the same runs are indistinguishable.

Most accounts of entropy dynamics assume on-policy rollouts.
There, entropy follows the covariance between advantages and log-probabilities~\citep{cui2025entropy} and responds to clipping~\citep{park2025clip}, and baseline invariance makes a constant advantage shift harmless~\citep{williams1992reinforce}.
In particular, the constant part of an advantage shift has no first-order effect on entropy, so this theory gives no reason why masking and failing should push entropy in opposite directions.

In practice, engines often sample with top-$k$, top-$p$, or min-$p$ truncation, or at a temperature that the learner does not apply (\cref{sec:practice}).
Under such \emph{sampler mismatch}, a constant advantage shift adds a drift toward or away from the sampler (\cref{fig:teaser}a,b)~\citep{leroux2016efficient,leroux2025topr,marek2026score}.
Recent systems remove it by replaying the sampling mask~\citep{deepseek2025v32,mai2026thinking,xiaomi2026mimo,cognition2026swe17} or by centering the score~\citep{marek2026score}.
An automated review on a torchtitan pull request argues informally that under truncation the drift sharpens or flattens the policy and does not average out over a group~\citep{torchtitan5021}, and SCOPE-RL finds that positive samples raise entropy under high-temperature sampling~\citep{scoperl2025}.
We know of no sign statement for general samplers derived from the policy, and it has been unclear which batch quantity controls the drift in group-based RLVR, where advantages within a group already sum to zero, or what each correction removes.

We show that the sampler fixes the sign of the drift's effect on entropy, and that one batch statistic sets the part of the drift that depends on the baseline.
Our sign law (\cref{thm:sign}) covers every sampler sharper than the policy, including top-$k$, top-$p$, min-$p$, and temperatures below one: a positive advantage shift lowers entropy at first order and a negative shift raises it.
Flatter samplers, such as temperatures above one, reverse both directions.
A token-level loss shifts each group by its \emph{token-weighted net advantage} $\net_g$ (\cref{fig:teaser}c), which equals $\Cov(r,L)/\bar L$ under a group-mean baseline and gains a nonnegative term under masking (\cref{tab:contracts}).
Long responses fail more often, so failing them makes $\net$ negative and pushes entropy up, whereas in our runs masking makes it positive and pushes entropy down.
A second term arises because failed responses visit less confident prefixes.
It does not depend on the baseline and is smaller on real batches, but it feeds back on itself: in a two-level model, a push away from a sharper sampler accelerates and a pull toward it slows down (\cref{prop:feedback}).
Setting $\net=0$, as the length-weighted baseline of OPO~\citep{hao2025opo} does, removes the net-advantage term, and weighting the baseline by each response's gap cancels the two terms against each other.
Renormalizing the learner over the sampled support removes the whole drift and coincides with score centering, whereas token-level importance ratios only rescale it.

On a 2B model under truncated sampling, the entropy slope is linear in shifts of $\net$ and, across 13 rules, in the logged first-order drift.
The difference between masking and failing follows the sharpness of six samplers, reverses at temperature 1.2, and falls to 2--4\% under learner-side corrections, but not under importance ratios or multi-minibatch PPO.
In 100-step runs, zeroing $\net$ only delays an entropy explosion that exact sampling avoids (\cref{fig:long,fig:controls}).
At 9B, zeroing $\net$ or renormalizing the learner gains 8.2 and 8.8 held-out points over the uncorrected rule in ten steps under truncation, and nothing under exact sampling.
Our contributions are:
\begin{itemize}
\item A sign law for every sampler sharper or flatter than the policy (\cref{sec:position,sec:sign}).
\item A split of the mismatch drift into a net-advantage term and a gap covariance, an exactly solvable model of its asymmetric feedback, and what each correction removes (\cref{sec:aggregate,sec:fixes}).
\item Controlled evidence at 2B and 9B, including 100-step runs that separate the feedback, a sign check on OLMo-3-7B, and a survey of practice (\cref{sec:experiments,sec:practice}).
\end{itemize}

%% file: sections/setup.tex
\section{Setup}
\label{sec:setup}

\paragraph{Group-based RLVR with token-level losses.}
For each prompt $x$ the sampler draws a group of $G$ responses $y_1,\dots,y_G$ with lengths $L_i$ and verifiable rewards $r_i$.
A group baseline turns rewards into advantages, $A_i = r_i - \bar r$ with $\bar r=\frac1G\sum_j r_j$ (optionally divided by the group standard deviation; \citealp{shao2024deepseekmath}).
The policy $\pol_\theta$ is updated with the token-level surrogate
\begin{equation}
  \mathcal{J}(\theta) = \sum_{i} \omega_i A_i \sum_{t=1}^{L_i} \log \pol_\theta(y_{i,t}\mid s_{i,t}),
  \label{eq:surrogate}
\end{equation}
where $s_{i,t}=(x,y_{i,<t})$ is the prefix and $\omega_i\ge 0$ is the per-token weight that the aggregation rule assigns to response $i$.
Token-mean and prompt-mean aggregation give every token of a group the same weight ($\omega_i \propto 1/\sum_{j} L_j$), while sequence-mean-token-mean aggregation gives every response the same total weight ($\omega_i \propto 1/L_i$).
We follow the common practice of one optimizer step per batch, with old log-probabilities recomputed by the learner, so the ratio of proximal policy optimization (PPO) equals one and the gradient of \cref{eq:surrogate} is the policy gradient.
\Cref{sec:fixes} discusses importance ratios.

\paragraph{Over-length responses.}
Responses that reach the generation limit have no verifiable answer, and neither do a few that end early (about 30\% and 7\% of responses at 2B).
Pipelines treat both in one of three ways, all used throughout.
\MASK{}, a form of overlong filtering~\citep{yu2025dapo}, gives them zero reward when computing the baseline and then excludes them from the loss.
\FAIL{} gives them zero reward and trains on them.
\PEN{} additionally lowers the reward of long completed responses with a linear over-length penalty~\citep{yu2025dapo}.

\paragraph{Sampler mismatch.}
The learner differentiates $\log \pol_\theta(\cdot\mid s)=\log\mathrm{softmax}(z_\theta(s))$ at temperature one, but rollouts are drawn from a sampler $\smp(\cdot\mid s)$ derived from it.
Temperature scaling gives $\smp\propto\pol^{1/T}$.
Truncation keeps a set $\keep(s)$ of the most probable tokens and renormalizes, $\smp = \pol\,\mathbf 1_{\keep}/\pol(\keep)$; top-$k$, top-$p$, and min-$p$, applied in sequence as in vLLM~\citep{kwon2023vllm}, all produce such a set.
We say the pipeline has \emph{sampler mismatch} when $\smp\neq\pol$, and \emph{exact sampling} when $T=1$ and no truncation is applied.

\paragraph{Baseline invariance holds only without mismatch.}
Adding a constant $c$ to every advantage adds $c\sum_i\omega_i\sum_t\nabla\log\pol_\theta(y_{i,t}\mid s_{i,t})$ to the gradient.
At each visited prefix the sampled token has conditional expectation
\begin{equation}
  \E_{y\sim\smp}\!\big[c\nabla_\theta\log\pol_\theta(y\mid s)\big] = -c\nabla_\theta \KL\big(\smp(\cdot\mid s)\|\pol_\theta(\cdot\mid s)\big),
  \label{eq:drift}
\end{equation}
with $\smp$ held fixed.
It vanishes exactly when $\smp=\pol$, which is the classical baseline invariance~\citep{williams1992reinforce}.
Under mismatch it pulls the policy toward the sampler for $c>0$ and pushes it away for $c<0$, as prior work has noted~\citep{leroux2016efficient,leroux2025topr,arnal2025asymre,marek2026score}.

%% file: sections/theory.tex
\section{A Sign Law for Mismatch Drift}
\label{sec:theory}

The analysis is first order and uses plain gradient steps in the softmax parameterization, where every prefix has its own logits.
\Cref{sec:exp-exists} measures the same quantity in trained networks on held-out prompts.

\subsection{Entropy change at one position}
\label{sec:position}

Fix a prefix $s$ with logits $z$, policy $\pol=\mathrm{softmax}(z)$ over vocabulary $\mathcal V$, and entropy $H=-\sum_v\pol_v\log\pol_v$.
Sampling token $a$ with advantage $A$ and taking a gradient step of size $\eta$ on $A\log\pol_a$ changes the logits by $\eta A(e_a-\pol)$.
Since $\partial H/\partial z_v = -u_v$ with $u_v \coloneqq \pol_v(\log\pol_v+H)$, the entropy changes by
\begin{equation}
  \Delta H = -\eta A\,\Phi(a) + O(\eta^2), \qquad \Phi(a) \coloneqq u_a - \E_{v\sim\pol}[u_v].
  \label{eq:phi}
\end{equation}
Under exact sampling $\E_{a\sim\pol}[\Phi(a)]=0$, so a constant advantage has no first-order effect on entropy in expectation; this is the entropy-neutral identity behind on-policy analyses~\citep{cui2025entropy,stare2026}.
Under a sampler $\smp$ the same constant $c$ gives
\begin{equation}
  \E_{a\sim\smp}[\Delta H] = -\eta c\,\gap(\smp) + O(\eta^2),
  \label{eq:gap}
\end{equation}
where $\gap(\smp)\coloneqq\E_{a\sim\smp}[\Phi(a)] = \E_{\smp}[u]-\E_{\pol}[u]$.
We call $\gap(\smp)$ the \emph{sharpness gap} of the sampler.
It is zero for exact sampling and, as we show next, has a fixed sign whenever the sampler is ordered relative to the policy.

\subsection{The sign of the sharpness gap}
\label{sec:sign}

The sign of $\gap$ is determined by how the sampler reweights tokens relative to the policy.

\begin{definition}
A sampler $\smp$ is \emph{sharper} than $\pol$ if $\smp_u/\pol_u \ge \smp_v/\pol_v$ whenever $\pol_u>\pol_v$, and \emph{flatter} if the reverse inequality holds.
\end{definition}

Truncation to any set $\keep$ that contains every token more probable than its least probable member is sharper than $\pol$; this covers top-$k$, top-$p$, min-$p$, and their sequential composition.
Temperature $T<1$ is sharper, temperature $T>1$ and mixing with the uniform distribution are flatter, and temperature $T\le 1$ followed by truncation is sharper.
Temperature $T>1$ followed by truncation is neither, and the sign of its gap depends on $\pol$.

\begin{theorem}[Sign law]
\label{thm:sign}
Let $\pol$ be a non-uniform distribution with full support and let $\smp\neq\pol$.
If $\smp$ is sharper than $\pol$, then $\gap(\smp)>0$; if $\smp$ is flatter than $\pol$, then $\gap(\smp)<0$.
\end{theorem}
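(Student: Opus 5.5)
The plan is to rewrite the sharpness gap as a covariance that Chebyshev's sum inequality can sign. Set $w_v\coloneqq\smp_v/\pol_v$, so that $\E_{\pol}[w]=1$. Because $\sum_v\smp_v u_v=\sum_v\pol_v w_v u_v$,
\begin{equation*}
  \gap(\smp)=\sum_v (\smp_v-\pol_v)\,u_v=\sum_v \pol_v^2\,(w_v-1)\,(\log\pol_v+H).
\end{equation*}
The direct route, reading $\gap$ as $\Cov_{\pol}(w,u)$ and appealing to monotonicity, fails. The function $p\mapsto p(\log p+H)$ is not monotone in $p$, so $u$ need not be ordered like $w$.

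Instead I absorb the extra factor $\pol_v$ into an escort distribution $\nu_v\coloneqq\pol_v^2/Z$ with $Z=\sum_v\pol_v^2$. Then
\begin{equation*}
  \gap(\smp)/Z=\E_\nu\big[(w-1)(\log\pol+H)\big]=\Cov_\nu(w,\log\pol)+\big(\E_\nu[w]-1\big)\big(\E_\nu[\log\pol]+H\big).
\end{equation*}
For a sharper sampler, $w$ is nondecreasing in $\pol_v$ across distinct probability levels, and $\log\pol$ is increasing. The three factors are then signed as follows.

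\textbf{First term.} Write it as $\Cov_\nu(w,\log\pol)=\tfrac12\sum_{u,v}\nu_u\nu_v(w_u-w_v)(\log\pol_u-\log\pol_v)$. Pairs with $\pol_u=\pol_v$ contribute zero, and every other pair contributes $\ge0$.

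\textbf{Second factor.} The identity $\E_\nu[w]-1=\E_{\pol}[\pol w]/Z-\E_\pol[w]=\Cov_\pol(\pol,w)/Z$ holds, and this is $\ge0$ by the same pairwise argument under $\pol$.

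\textbf{Third factor.} Similarly $\E_\nu[\log\pol]+H=\E_\nu[\log\pol]-\E_\pol[\log\pol]=\Cov_\pol(\pol,\log\pol)/Z$. This is $\ge0$, and in fact $>0$ because $\pol$ is non-uniform.

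Hence $\gap\ge0$. For flatter samplers, $w$ is nonincreasing across levels. The first term and the second factor flip sign while the third factor does not, so $\gap\le0$.

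The main obstacle is strictness, because the definition constrains $w$ only across distinct probability levels and leaves it free within ties. It suffices to show that the first term is strictly positive, since the product term is already $\ge0$. Since $\smp\ne\pol$ and $\E_\pol[w]=1$, $w$ is nonconstant. Since $\pol$ is non-uniform, there are at least two probability levels.

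Suppose every cross-level pair had $w_u=w_v$. Take any two tokens $a,b$. If they lie in different levels, then $w_a=w_b$ directly. If they lie in the same level, pick a token $c$ from another level, which gives $w_a=w_c=w_b$. So $w$ would be constant, a contradiction. Hence some pair with $\pol_u>\pol_v$ has $w_u>w_v$, for a sharper sampler, and its term in the pairwise sum is strictly positive. This gives $\gap(\smp)>0$. The flatter case is identical with the inequalities reversed, giving $\gap(\smp)<0$. Full support of $\pol$ is used only so that $w$ and $\log\pol$ are well defined and every $\nu_v>0$.
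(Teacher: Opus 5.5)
Your proof is correct, and it takes a genuinely different route from the paper's. The paper never writes $\gap$ as a covariance. It first proves that a truncation $q_\keep$ to an upper set has $\gap(q_\keep)=\pol(\keep^c)\,(m_\keep-m_{\keep^c})$, where $m_X=\E_\pol[u\mid X]$. It signs this by comparing $\keep$ with the nested set $\{v:\pol_v\ge e^{-H}\}$ on which $u\ge0$; that two-case argument is its way around the non-monotonicity of $p\mapsto p(\log p+H)$ that you identify. It then uses the layer-cake identity to write a sharper sampler as the mixture $\smp=\int_0^\infty\pol(\keep_s)\,q_{\keep_s}\,ds$ over the upper level sets $\keep_s=\{v:\smp_v/\pol_v\ge s\}$, and concludes by linearity of $\gap$ in $\smp$, using lower sets for flatter samplers.

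You instead move the extra factor $\pol_v$ into the escort measure $\nu\propto\pol^2$. This gives the exact identity $\gap(\smp)=Z\,\Cov_\nu(w,\log\pol)+\Cov_\pol(\pol,w)\,\Cov_\pol(\pol,\log\pol)/Z$, in which every covariance pairs functions that are comonotone across probability levels. One pairwise Chebyshev argument therefore signs both cases with no truncation lemma or mixture integral, and your short connectivity argument settles ties and strictness in a few lines.

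Your identity also shows that monotonicity of $w$ is more than is needed. The conditions $\Cov_\nu(w,\log\pol)\ge0$ and $\Cov_\pol(\pol,w)\ge0$, with one of them strict, already give $\gap>0$. These can be checked directly for samplers that are neither sharper nor flatter, such as temperature above one followed by truncation, whose sign the paper leaves dependent on $\pol$.

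The paper's route, in turn, produces the truncation formula that it reuses to compute the gap of the two-level model in the proof of \cref{prop:feedback}. It also exhibits every sharper sampler as a mixture of truncations, which ties the theorem directly to top-$k$, top-$p$, and min-$p$.
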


\emph{Proof idea.}
A truncation to a set $\keep$ of the most probable tokens has $\gap=\pol(\keep^c)(\E_\pol[u\mid\keep]-\E_\pol[u\mid\keep^c])>0$, which follows by comparing $\keep$ with the nested set $\{v:\pol_v\ge e^{-H}\}$ on which $u\ge0$.
A sharper sampler is a mixture of such truncations over the level sets of $\smp/\pol$, and $\gap$ is linear in $\smp$; flatter samplers mix over the complements, which flips the sign (\cref{app:proofs}). \hfill$\square$

By \cref{eq:gap}, a constant shift $c$ therefore lowers entropy at first order under a sharper sampler when $c>0$ and raises it when $c<0$, and the directions reverse under a flatter sampler.

\subsection{From positions to batches: the net advantage}
\label{sec:aggregate}

In group-based RLVR the advantages of each group sum to zero, so it is not obvious that any constant shift is present.
The token-level loss, however, weights response $i$ by $\omega_iL_i$, and the shift it applies is the length-weighted mean advantage of each group.

\begin{definition}
For group $g$, let $W_g=\sum_{i\in g}\omega_iL_i$. The \emph{token-weighted net advantage} of the group is $\net_g = \sum_{i\in g}\omega_iL_iA_i/W_g$, and that of the batch is $\net=\sum_gW_g\net_g/\sum_gW_g$.
\end{definition}

Write $\gap_{i,t}$ for the sharpness gap at prefix $s_{i,t}$, $\bar\gap_i$ for its mean over response $i$, and $\bar\gap_g$ and $\Cov_g$ for the $\omega_iL_i$-weighted mean and covariance over group $g$.

\begin{proposition}[Decomposition]
\label{prop:aggregate}
Treat the token weights $\omega_i$ and any baseline as fixed given the group, ignoring the $O(1/G)$ dependence of data-dependent weights and baselines on each sampled response (masking adds an $O(1)$ dependence, discussed below). In the softmax parameterization, the first-order change of entropy summed over the visited prefixes is $\Delta H_{\mathrm{tot}}=D_{\mathrm{net}}+D_{\mathrm{cov}}+D_{\mathrm{on}}$, with
\begin{equation}
\begin{aligned}
  D_{\mathrm{net}}&=-\eta\textstyle\sum_g W_g\,\net_g\,\bar\gap_g,\\
  D_{\mathrm{cov}}&=-\eta\textstyle\sum_g W_g\Cov_g(A,\bar\gap),\\
  D_{\mathrm{on}}&=-\eta\textstyle\sum_i\omega_iA_i\sum_t\big(\Phi_{i,t}(y_{i,t})-\gap_{i,t}\big).
\end{aligned}
\label{eq:aggregate}
\end{equation}
Shifting the advantages of a group by a constant changes only $D_{\mathrm{net}}$, up to a term with zero mean given each prefix. Under exact sampling, $\gap\equiv0$ and $D_{\mathrm{net}}=D_{\mathrm{cov}}=0$.
\end{proposition}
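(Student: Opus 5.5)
The decomposition is an exact algebraic identity obtained by adding and subtracting the sharpness gap at each visited prefix. Its content is in the bookkeeping and in which terms a constant shift touches.

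\textbf{Total first-order change.} In the softmax parameterization every prefix has its own logits. The gradient of \cref{eq:surrogate} therefore moves the logits at $s_{i,t}$ by $\eta\,\omega_iA_i(e_{y_{i,t}}-\pol)$, summed over all visits to that prefix. Because we treat $\omega_i$ and $A_i$ as fixed given the group, linearity of \cref{eq:phi} gives
\[
\Delta H_{\mathrm{tot}}=-\eta\textstyle\sum_i\omega_iA_i\sum_t\Phi_{i,t}(y_{i,t})+O(\eta^2).
\]
Distinct responses that share a prefix simply add their contributions, since the first-order effects are linear in the logit change.

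\textbf{Splitting off the on-sampler fluctuation.} Write $\Phi_{i,t}(y_{i,t})=\gap_{i,t}+(\Phi_{i,t}(y_{i,t})-\gap_{i,t})$. The second piece gives exactly $D_{\mathrm{on}}$. Since $\gap_{i,t}=\E_{a\sim\smp}[\Phi_{i,t}(a)]$ by \cref{eq:gap}, this piece has zero conditional mean given the prefix. What remains is
\[
-\eta\textstyle\sum_i\omega_iA_iL_i\bar\gap_i .
\]
Group this by $g$ and use the $\omega_iL_i$-weighted identity
\[
\textstyle\sum_{i\in g}\omega_iL_iA_i\bar\gap_i=W_g\big(\net_g\bar\gap_g+\Cov_g(A,\bar\gap)\big).
\]
This is the identity $\E[XY]=\E X\,\E Y+\Cov(X,Y)$ under the probability weights $\omega_iL_i/W_g$, and by definition $\net_g$ is the weighted mean of $A$. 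The two terms are $D_{\mathrm{net}}$ and $D_{\mathrm{cov}}$.

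\textbf{Shift invariance and exact sampling.} Replacing $A_i$ by $A_i+c$ for all $i\in g$ does three things:
\begin{itemize}
\item It leaves $\Cov_g(A,\bar\gap)$ unchanged, because covariance is shift-invariant.
\item It changes $\net_g$ to $\net_g+c$, and hence changes $D_{\mathrm{net}}$.
\item It adds $-\eta c\sum_{i\in g}\omega_i\sum_t(\Phi_{i,t}(y_{i,t})-\gap_{i,t})$ to $D_{\mathrm{on}}$, which has zero mean given each prefix. This is the qualifier in the statement.
\end{itemize}
Under exact sampling, $\smp=\pol$, so $\gap\equiv0$ by the identity $\E_\pol[\Phi]=0$. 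Hence $\bar\gap_i=\bar\gap_g=0$, and both $D_{\mathrm{net}}$ and $D_{\mathrm{cov}}$ vanish.

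\textbf{Main obstacle.} The delicate step is justifying the conditional-mean statement for $D_{\mathrm{on}}$. The weights and baselines depend on the sampled tokens, including the token at the prefix being conditioned on. So $\E[\omega_iA_i(\Phi-\gap)\mid s_{i,t}]$ is not exactly $\omega_iA_i\cdot 0$. I would handle this as the statement permits: condition on the group's weights and baseline, and note that each response enters $\bar r$ and $\omega_i$ only with weight $O(1/G)$. Masking is the exception, because it changes $\omega_i$ at order one and is deferred. Future tokens of the same response correlate with $A_i$ through the reward, but that correlation lives in $D_{\mathrm{on}}$ and is not claimed to vanish. The only claims are the exact algebraic split and the zero-mean property of the shift-induced increment, which needs only that $c$ be fixed given the group.
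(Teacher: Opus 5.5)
Your proof is correct and follows the paper's argument step for step: you sum the per-prefix logit updates through the linearized entropy change, add and subtract $\gap_{i,t}$, apply the $\omega_iL_i$-weighted mean--covariance identity within each group, and make the same three observations about a within-group shift. Your closing caveat about the conditional mean is the idealization the statement already imposes (weights and baselines held fixed given the group), and the paper relies on the same idealization when it says the shift-induced increment has zero mean because $y_{i,t}\sim\smp(\cdot\mid s_{i,t})$ and $\omega_i$ is fixed.
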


The sum $D_{\mathrm{net}}+D_{\mathrm{cov}}$ is the \emph{mismatch drift}, and $D_{\mathrm{on}}$ contains the on-policy covariance between advantages and token probabilities~\citep{cui2025entropy}.
Only $D_{\mathrm{net}}$ depends on the baseline.
Under a sharper sampler each group contributes to it with the sign of $-\net_g$, so the net advantage predicts the direction of the baseline-controlled drift.
$D_{\mathrm{cov}}$ arises when advantages correlate with gaps.
Its sign is not fixed in general (\cref{app:proofs}), but in our runs failed responses visit less confident prefixes (\cref{sec:exp-exists}), so it raises entropy under every rule we test.
On 9B batches, $D_{\mathrm{net}}$ carries 66--78\% of the mismatch drift under \FAIL{} and \PEN{} (\cref{app:more}).

With a group-mean baseline, $\net_g=\Cov_g(r,L)/\bar L_g$ (divided by $\sigma_g$ under standard-deviation normalization), which is negative under \FAIL{} and \PEN{} because long responses fail more often.
Sequence-mean aggregation makes the batch-level $\net$ zero (\cref{tab:contracts}).
\MASK{} adds $(n_{\mathrm{mask}}/G)\bar r_M\ge0$, where $n_{\mathrm{mask}}$ is the number of masked responses and $\bar r_M$ the mean reward over the unmasked set $M$, because masked responses lower the baseline but carry no negative advantage.
In our runs $\net=+0.30$ at 2B and $+0.16$ at 9B, so choosing between masking and failing flips the sign of the baseline-controlled drift.
It would not if masked responses were dropped before computing the baseline.
Because a response is masked exactly when it reaches the limit, masking also tilts a shift toward responses that finish.
This happens even under exact sampling, but there \MASK$-$\FAIL{} is only $+0.7$, against $-21.3$ under truncation (\cref{sec:exp-exists}).
The length-weighted baseline $b_g=\sum_jL_jr_j/\sum_jL_j$ of OPO~\citep{hao2025opo} sets $\net_g=0$, and we write \NZ{} for applying it.

\paragraph{Feedback through the gap.}
Both mismatch terms scale with the gap, which itself depends on the policy.
A two-level model solves the resulting dynamics exactly.

\begin{proposition}[Asymmetric feedback]
\label{prop:feedback}
Let a prefix have $k<V/2$ tokens with logit $\Lambda>0$ and $V-k$ tokens with logit $0$, sampled by top-$k$; let $x=\pol(\keep^c)$ be the discarded mass and $\kappa=1/k+1/(V-k)$.
Under the expected update $\dot z=\eta c(\smp-\pol)$ for a constant advantage $c$, the logits keep this form, and
\begin{enumerate}
\item[(i)] $\gap=\kappa\Lambda x^2(1-x)$ and $\dot x=-\eta c\kappa x^2(1-x)$;
\item[(ii)] if $c<0$, $x$ reaches $1/2$ at the finite time $t_{1/2}=\big(1/x_0-2+\log\frac{1-x_0}{x_0}\big)/(\eta|c|\kappa)$; if $c>0$, $x$ decays only as $\Theta(1/t)$;
\item[(iii)] if the on-policy term also raises $\Lambda$ at rate $\rho>0$, a push $c<0$ runs away when $x>\rho/(\eta|c|\kappa)$ and is reversed when $x<\rho/(\eta|c|\kappa)$.
\end{enumerate}
\end{proposition}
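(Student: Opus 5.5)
The plan is to reduce everything to a scalar ODE for the logit gap $\Lambda$, then translate it into the discarded mass $x$. Write $p=\pol_v$ for each of the $k$ top tokens and $q=\pol_v$ for each of the $V-k$ bottom tokens. Then $x=(V-k)q=(V-k)/(ke^\Lambda+V-k)$ and $1-x=kp$. Top-$k$ gives $\smp=\mathbf 1_{\keep}/k$.

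\emph{Invariance of the form.} Under $\dot z=\eta c(\smp-\pol)$, every top logit moves at the common rate $\eta c(1/k-p)$ and every bottom logit at $-\eta c\,q$. The two-level form is therefore preserved. Because softmax is shift invariant, only the difference of the two rates matters:
\[
\dot\Lambda=\eta c\,(1/k-p+q)=\eta c\,\big(x/k+x/(V-k)\big)=\eta c\kappa x .
\]
Differentiating $x(\Lambda)$ gives $dx/d\Lambda=-x(1-x)$. Hence $\dot x=-\eta c\kappa x^2(1-x)$, which is the second half of (i).

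\emph{The gap.} For the first half of (i), I would use the truncation formula from the proof idea of \cref{thm:sign},
\[
\gap=\pol(\keep^c)\big(\E_\pol[u\mid\keep]-\E_\pol[u\mid\keep^c]\big),
\]
whose derivation is just linearity of $\gap$ in $\smp$ with $\smp=\pol\mathbf 1_\keep/\pol(\keep)$. The key simplification is
\[
H=-(1-x)\log p-x\log q,
\]
so $\log p+H=x\Lambda$ and $\log q+H=-(1-x)\Lambda$. Then $u$ on a top token equals $p\,x\Lambda$ and on a bottom token equals $-q(1-x)\Lambda$. Their difference is $\Lambda x(1-x)\big(1/k+1/(V-k)\big)=\kappa\Lambda x(1-x)$. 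Multiplying by $x$ gives $\gap=\kappa\Lambda x^2(1-x)$.

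\emph{Part (ii).} For $c<0$ the ODE $\dot x=\eta|c|\kappa x^2(1-x)$ is separable. The partial fraction decomposition is
\[
\frac{1}{x^2(1-x)}=\frac1{x^2}+\frac1x+\frac1{1-x},
\]
with antiderivative $-1/x+\log\frac{x}{1-x}$. Integrating from $x_0$ to $1/2$ (assuming $x_0<1/2$, i.e.\ $\Lambda_0$ large enough) gives exactly the stated $t_{1/2}$, which is finite. For $c>0$, $x$ decreases monotonically, so $1-x\in[1-x_0,1]$ along the trajectory. Comparing with $\dot y=-a y^2$ for $a=\eta c\kappa$ and $a=\eta c\kappa(1-x_0)$ sandwiches $1/x$ between $1/x_0+a(1-x_0)t$ and $1/x_0+at$, which gives $x=\Theta(1/t)$.

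\emph{Part (iii).} Adding the on-policy term gives $\dot\Lambda=\rho+\eta c\kappa x=\rho-\eta|c|\kappa x$. The threshold is $x^*=\rho/(\eta|c|\kappa)$. If $x>x^*$, then $\Lambda$ decreases, so $x$ increases and stays above $x^*$; $\dot\Lambda$ becomes more negative, and the same comparison as in (ii) shows runaway. If $x<x^*$, then $\Lambda$ increases, $x$ decreases and stays below $x^*$, so the push is reversed. Thus $x^*$ is an unstable fixed point.

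The main obstacle I expect is part (iii): I need to make "runs away" precise, for instance by showing that $x$ reaches $1/2$ in finite time once $x_0>x^*$, via $\dot x\ge(\eta|c|\kappa x-\rho)x(1-x)$. The rest is routine algebra once the identity $\log p+H=x\Lambda$ is in hand.
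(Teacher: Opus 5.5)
Your proposal is correct and follows the paper's proof essentially step for step. It uses the same rate computation $\dot\Lambda=\eta c\kappa x$ with $\mathrm dx/\mathrm d\Lambda=-x(1-x)$, the same identity $\log p+H=x\Lambda$ fed into the truncation formula for the gap, the same partial-fraction integral for $t_{1/2}$, and the same sign argument on $\dot\Lambda=\rho-\eta|c|\kappa x$ for (iii). The differences are minor: for $c>0$ you sandwich $1/x$ between two linear functions of $t$, where the paper reads $\Theta(1/t)$ off the exact implicit solution $1/x-\log\frac{x}{1-x}=1/x_0-\log\frac{x_0}{1-x_0}+\eta c\kappa t$, and you make explicit the assumption $x_0<1/2$ and the finite-time runaway in (iii), which the paper leaves implicit.
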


By (i), $|\dot x|$ grows with $x$ as long as less than two thirds of the mass is discarded, so a push away from a sharper sampler speeds up, while a pull toward it slows down.
Under \NZ{}, failed responses visit prefixes with larger gaps, so one class of prefixes is pushed and another is pulled.
Their summed entropy then eventually rises even if it first falls, provided the pulled class starts near its floor.
A constant compensation smaller than the push only delays the rise, whereas cancelling the first-order drift at every step keeps total entropy constant at first order (\cref{app:proofs}, \cref{fig:feedback}).
Top-$p$ caps the discarded mass at $1-p$, and hence the $\ell_1$ norm of the per-step logit drift at $2\eta|c|(1-p)$, whereas under top-$k$ the drift grows with the discarded mass.
Adam's diagonal preconditioner can change the sign at individual prefixes, although the pure-shift probes of \cref{sec:exp-exists} have the same sign under Adam and plain gradient steps.

\subsection{What removes the drift}
\label{sec:fixes}

\Cref{prop:aggregate} suggests three kinds of fixes.
One can zero the net advantage, which removes the baseline-controlled term, choose the baseline so that the two mismatch terms cancel, or remove the mismatch on the learner's side.

\begin{proposition}[Corrections]
\label{prop:fixes}
Let $\smp=\pol\mathbf 1_\keep/\pol(\keep)$ be a truncation.
\begin{enumerate}
\item[(a)] \emph{Net-zero advantages.} \NZ{} sets every $\net_g=0$ and removes the net-advantage drift; the gap covariance and the on-policy term remain.
\item[(b)] \emph{Gap-weighted baseline.} The group baseline $b_g=\sum_{i\in g}\omega_iL_i\bar\gap_ir_i\big/\sum_{i\in g}\omega_iL_i\bar\gap_i$, which we call \GW{}, makes $\sum_{i\in g}\omega_iL_iA_i\bar\gap_i=0$ in every group, so the two mismatch terms cancel: $D_{\mathrm{net}}+D_{\mathrm{cov}}=0$ at first order, with a small net advantage offsetting the gap covariance. Since $\bar\gap_i\ge0$ under a sharper sampler, $b_g$ is a weighted mean of the group's rewards.
\item[(c)] \emph{Support renormalization equals score centering.} If the learner renormalizes over the sampled support, $\log\pol_\keep(y)=\log\pol(y)-\log\pol(\keep)$, then $\nabla\log\pol_\keep(y)=\nabla\log\pol(y)-\E_{v\sim\smp}[\nabla\log\pol(v)]$, which is the centered score of \citet{marek2026score}. The learner's distribution then equals $\smp$, and both mismatch terms vanish.
\item[(d)] \emph{Token-level importance ratios rescale the drift.} The exact ratio $\pol/\smp$ equals $\pol(\keep)$ on $\keep$, so a constant shift $c$ gives $\E_{y\sim\smp}[\pol(\keep)\,c\nabla\log\pol(y)]=\pol(\keep)\cdot\E_{\smp}[c\nabla\log\pol]$: each prefix's drift is multiplied by its kept mass and keeps its sign, and no other weighting of sampled tokens removes it (\cref{app:proofs}).
\end{enumerate}
\end{proposition}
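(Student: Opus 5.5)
The plan is to prove the four parts separately, all at a single prefix and in the softmax parameterization, and then lift them to batches through \cref{prop:aggregate}. Part (a) follows directly from the decomposition. \NZ{} uses the baseline $b_g=\sum_j L_j r_j/\sum_j L_j$. Substituting it into the definition of $\net_g$ gives $\sum_i\omega_iL_i(r_i-b_g)=0$ whenever $\omega_i$ is constant within the group (token-mean aggregation), so $\net_g=0$ and $D_{\mathrm{net}}=0$. For general $\omega_i$ we read \NZ{} as the $\omega_iL_i$-weighted mean, which gives the same conclusion. Neither $D_{\mathrm{cov}}$ nor $D_{\mathrm{on}}$ depends on the baseline shift; the covariance is invariant to constant shifts. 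So both survive, and this is all (a) claims.

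For (b), I will write $D_{\mathrm{net}}+D_{\mathrm{cov}}$ as one sum. Within each group, $W_g\net_g\bar\gap_g+W_g\Cov_g(A,\bar\gap)=\sum_{i\in g}\omega_iL_iA_i\bar\gap_i$, since weighted mean times mean plus covariance equals the weighted mean of the product. Strictly, $D_{\mathrm{net}}+D_{\mathrm{cov}}=-\eta\sum_i\omega_iA_i\sum_t\gap_{i,t}=-\eta\sum_i\omega_iL_iA_i\bar\gap_i$, which matches how these terms arise in the proof of \cref{prop:aggregate}. With $A_i=r_i-b_g$ and the stated $b_g$, this sum vanishes by construction. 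For nonnegativity I invoke \cref{thm:sign}: truncation is sharper, so every $\gap_{i,t}\ge0$, with equality only at uniform or untruncated prefixes. Then $b_g$ is a convex combination of rewards whenever the denominator is positive; if it is zero I default to the group mean.

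For (c), differentiate $\log\pol(y)-\log\pol(\keep)$. Here $\nabla\log\pol(\keep)=\sum_{v\in\keep}\pol_v\nabla\log\pol_v/\pol(\keep)=\E_{v\sim\smp}[\nabla\log\pol(v)]$, which is exactly the centered score. The learner's distribution $\pol_\keep$ coincides with $\smp$, so the sampling is exact for the learner, and its gap vanishes. I would check this by repeating the \cref{eq:phi} computation for the restricted softmax over $\keep$. Doing so gives $\E_\smp$ of a centered quantity, which is zero, so $D_{\mathrm{net}}=D_{\mathrm{cov}}=0$. One subtlety needs care: the entropy being tracked should be that of $\pol_\keep$, or, if it is $\pol$'s full entropy, the constant-shift expectation of the gradient is itself zero, $\E_\smp[c(\nabla\log\pol-\E_\smp\nabla\log\pol)]=0$. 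Either way the baseline-dependent drift is zero, and this zero-drift statement is the one I would state.

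For (d), the first identity is immediate, because $\pol_y/\smp_y=\pol(\keep)$ for $y\in\keep$. The main obstacle is the uniqueness clause, ``no other weighting removes it.'' I will formalize it as follows: a weighting $w(y)\ge0$ on $\keep$ that depends only on token probabilities, for which $\E_\smp[w\,\nabla\log\pol]$ drives the logits by $\sum_{y\in\keep}\smp_yw_y(e_y-\pol)$. This vector is zero only if $\sum_y\smp_yw_ye_y=(\sum_y\smp_yw_y)\pol$. The left side is supported on $\keep$, while $\pol$ has full support and $\keep^c\neq\emptyset$. So the total weight must be zero, which forces $w\equiv0$ on the sampled tokens. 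Any nontrivial weighting of sampled tokens therefore leaves a drift pointing toward $\keep$. The same argument, combined with \cref{thm:sign} applied to the reweighted sampler $\propto\smp w$ when $w$ is monotone, gives the sign claim.
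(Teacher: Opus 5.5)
Your proposal is correct and follows the paper's proof part by part. In (a) you use \cref{eq:aggregate} with every $\net_g=0$. In (b) you collapse $D_{\mathrm{net}}+D_{\mathrm{cov}}$ into $-\eta\sum_i\omega_iL_iA_i\bar\gap_i$ and use \cref{thm:sign} for the nonnegativity of the weights. In (c) you use $\nabla\log\pol(\keep)=\E_{v\sim\smp}[\nabla\log\pol(v)]$, which turns the logit update into $\eta A(e_y-\smp)$ with zero $\smp$-mean entropy effect. In (d) you read off a coordinate outside $\keep$ to force the total weight to vanish.

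Two small refinements. First, the restriction $w\ge0$ in (d) is unnecessary: once $\sum_y\smp_yw_y=0$, your identity gives $\sum_y\smp_yw_ye_y=0$ and hence $w_y=0$ on $\keep$ for signed weights as well, which is how the paper argues. Second, in (a) $D_{\mathrm{on}}$ is not literally baseline-independent; it shifts by a term with zero mean given each prefix (\cref{prop:aggregate}), which does not affect the conclusion.
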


\GW{} reduces to \NZ{} when gaps are equal.
It needs the same learner-side logits as renormalization, which removes the drift at every prefix and is unbiased for the truncated policy, whereas \GW{} cancels only the first-order entropy effect of each group.
We therefore use \GW{} mainly to isolate $D_{\mathrm{cov}}$.
Part (c) restates the sampling-mask replay of DeepSeek-V3.2~\citep{deepseek2025v32} (\cref{app:sc}).
For temperature mismatch, the corresponding fix divides the learner's logits by the sampling temperature.
Part (d) is specific to truncation.
A tempered sampler has full support, so exact ratios remove its drift, although clipping reintroduces part of it~\citep{marek2026score,han2026baselines}.
Truncated importance sampling~\citep{yao2025tis} inherits (d), sequence-level ratios~\citep{zheng2025gspo} can reverse the drift, and log-probabilities computed before truncation, which vLLM returns by default, give a ratio of one (\cref{app:proofs,app:stale}).
None of these corrections freezes entropy, since $D_{\mathrm{on}}$ remains.

%% file: sections/experiments.tex
\section{Experiments}
\label{sec:experiments}

We test the predictions of \cref{sec:theory} in turn, ending with accuracy at 9B and 100-step runs at 2B.

\subsection{Setup}
\label{sec:exp-setup}

\paragraph{Models and data.}
We use Qwen3.5-2B (non-thinking mode, released checkpoint), MiMo-V2.6-Distill-Qwen-9B~\citep{xiaomi2026mimo}, and, as a second model family, OLMo-3-7B-Instruct~\citep{olmo2025olmo3}.
The 9B experiments fork from step 20 (weights and optimizer state) of an earlier 9B run with the same trainer and the \FAIL{} rule, which sampled with top-$k$ 20 and top-$p$ 0.95 and did not renormalize the learner.
Prompts come from the MATH training set~\citep{hendrycks2021math} (7{,}496 problems, graded by math-verify); 750 are held out from all training, including the earlier 9B run, and supply all evaluation and probe prompts.

\paragraph{Training.}
Unless a rule says otherwise, all runs use one trainer: groups of 8 responses, a group-mean baseline without standard-deviation normalization, prompt-mean aggregation, AdamW with learning rate $10^{-6}$, gradient clipping at 1, no KL or entropy terms, and one optimizer step per batch.
The 2B runs use binary rewards, 16 groups per step, and a 2{,}048-token limit for 15 steps (100 in \cref{sec:exp-long}); the 9B runs add the \PEN{} penalty and use 32 groups per step and an 8{,}192-token limit for 10 steps.

\paragraph{Samplers and rules.}
We compare truncated sampling (top-$k$ 20, then top-$p$ 0.95, at $T=1$), top-$p$ 0.99 and top-$p$ 0.8 alone, exact sampling, and untruncated sampling at $T=0.7$ and $T=1.2$, with a learner at $T=1$.
As corrections we test support renormalization (\cref{app:details}), token-level importance ratios against the truncated distribution, exact or inside the clipped objective, and a learner that applies the sampling temperature.
At 2B the rules are \MASK{}, \FAIL{}, \FAIL+\NZ{}, and \FAIL+\GW{}, with sequence-mean aggregation, a fixed or per-batch shift, standard-deviation normalization, four clipped minibatch updates per batch, and SGD as controls.
At 9B the rule is \FAIL+\PEN{}, alone, with \NZ{}, or with a renormalized learner.
The 2B runs log the learner's gap for every sampled response.

\paragraph{Measuring entropy.}
Before every update we compute the mean full-softmax token entropy of the policy on a fixed probe set: one response for each of 64 held-out prompts, sampled once from the initial policy and scored by teacher forcing.
The \emph{entropy slope} of a run is the least-squares slope of this quantity over steps.
Runs with the same seed see the same prompts and sampling seeds.
We use two or three seeds per 2B cell and four per 9B cell, and treat the seed as the unit of replication.

\subsection{The net advantage predicts the direction of entropy}
\label{sec:exp-exists}
\label{sec:exp-predicts}

At the 9B fork the mean gap under the truncated sampler is $4.4\times10^{-3}$ per token, larger on completed incorrect than on correct responses ($5.5$ against $4.0\times10^{-3}$), which makes $D_{\mathrm{cov}}$ raise entropy.
A pure positive shift lowers held-out entropy at both scales, along the raw gradient and the Adam step; under exact sampling the effect is only 4\% (9B) or under 6\% (2B) as large (\cref{tab:probes}).

Under truncated sampling the three rules move entropy in the directions their net advantages predict (\cref{fig:teaser,tab:e4}).
\MASK{} ($\net\approx+0.30$) lowers it, \FAIL{} ($\net\approx-0.08$) raises it, and \FAIL+\NZ{} lies in between; the paired \MASK$-$\FAIL{} difference ranges from $-20.5$ to $-22.0\times10^{-3}$ nats per step across seeds.
Removing the mismatch from either side makes the ordering vanish: the difference falls to $-3\%$ of its truncated value with exact sampling and to $-4\%$ with support renormalization.
Since the sampler still truncates under renormalization, truncation by itself is not the cause.

\begin{figure*}[t]
\centering
\includegraphics[width=\textwidth]{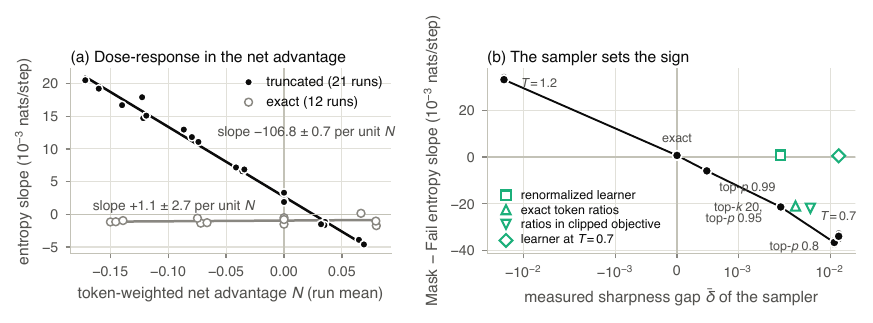}
\vskip -0.05in
\caption{\textbf{(a)} Dose-response on 2B: advantages of group $g$ shifted by $-\lambda\net_g$, one point per run. \textbf{(b)} Paired \MASK$-$\FAIL{} slope difference against the measured sharpness gap (symmetric log scale; small dots: seeds; temperatures untruncated with a learner at $T=1$; open green markers: corrections, offset horizontally from the gap of their sampler for visibility).}
\label{fig:sign}
\vskip -0.1in
\end{figure*}

A dose-response experiment varies the baseline-controlled term alone (\cref{fig:sign}a).
Starting from \FAIL{}, we subtract $\lambda\net_g$ from every advantage in group $g$.
This is a constant shift within each group, so by \cref{prop:aggregate} it changes only $D_{\mathrm{net}}$; $\lambda=1$ gives \NZ{}.
Under truncated sampling the entropy slope is linear in $\net$ (\cref{tab:dose}), at $(-106.8\pm0.7)\times10^{-3}$ nats per step per unit of $\net$ (21 runs; mean and s.e.\ over per-seed fits).
Under exact sampling it is $(+1.1\pm2.7)\times10^{-3}$, indistinguishable from zero.
This line describes shifts of a single rule, and \MASK{} ($\net=+0.30$) lies well above it ($-9.4$ against about $-29$), for two reasons.
First, masking removes the over-length responses, whose gaps are 1.5 times those of completed ones, so each unit of $\net$ carries less drift.
The logged drift $D_{\mathrm{net}}+D_{\mathrm{cov}}$ reflects this: across 35 runs of 13 rules under the truncated sampler, the 15-step entropy slope is linear in it with correlation 0.97 (\cref{fig:drift} in \cref{app:more}).
Second, under Adam the response saturates at large drift.
Shifting the advantages of \MASK{} to $\net=+0.15$ changes its slope only from $-9.8$ to $-8.6$, whereas $\net=0$ gives $+0.5$ and $\net=-0.24$ gives $+17.3$.
Under SGD at a matched rate, the slope halves along with the logged drift ($-8.3$ to $-4.3$; \cref{app:more}), so the saturation comes from Adam's per-parameter scaling.

The gap covariance moves entropy too, though less.
At $\net=0$ the truncated slope is still positive ($+2.4$ for \FAIL+\NZ{}), in line with its logged $D_{\mathrm{cov}}$ (\cref{fig:drift}), while exact sampling ($-1.0$) and support renormalization ($-0.3$) show no rise.
Raising $D_{\mathrm{cov}}$ at fixed $\net=0$, by adding to each advantage a multiple of the response's standardized mean log-probability, raises entropy more under truncated than under exact sampling (interaction $+4.0$ and $+3.2$; \cref{app:more}).
Over 15 steps \GW{}, which cancels both terms, brings the truncated slope to $-1.5$, the level of exact sampling and of a fixed shift with the same net advantage.

Sequence-mean aggregation, which makes $\net=0$ for \FAIL{} but not for \MASK{}, brings the \FAIL{} slope to the level of \FAIL+\NZ{} and leaves \MASK{} unchanged, as predicted out of sample (\cref{app:more}).

\subsection{The sampler sets the sign, and importance ratios do not remove it}
\label{sec:exp-sign}

The difference between masking and failing follows the measured sharpness gap of the sampler across six samplers (\cref{fig:sign}b).
At $T=1.2$, the only flatter sampler, the gap is negative and the rules reverse in all three seeds (\cref{tab:temp}).
\MASK{} raises entropy ($+21.5$), \FAIL{} lowers it ($-11.6$), and \FAIL+\NZ{} lies in between ($-6.6$); the \MASK$-$\FAIL{} difference is $+32.4$, $+32.5$, and $+34.3\times10^{-3}$ nats per step.
Among truncations the difference grows with the gap, monotonically but concavely, from $-5.9$ for top-$p$ 0.99 to $-36.5$ for top-$p$ 0.8 over a 23-fold range of gaps.
Even top-$p$ 0.99 retains 28\% of the effect of top-$k$ 20 with top-$p$ 0.95, and across sampler families the size is not a function of the gap alone.

Among the corrections tested in 15-step runs, only those that remove the mismatch on the learner's side remove the drift (\cref{fig:sign}b, green markers).
Support renormalization reduces the difference to $-4\%$ of its truncated value, and a learner that applies the sampling temperature reduces it at $T=0.7$ to 2\%.
Exact token-level importance ratios leave 98\% of it, consistent with \cref{prop:fixes}d and a mean kept mass of 0.99, and inside the clipped objective the same ratios leave 103\%.
Multi-minibatch PPO, with four clipped updates per batch against the sampling-time policy, amplifies the difference sevenfold per batch under truncation ($-139$ and $-154$, about 1.7 times per update); under exact sampling 1\% of that remains.
Under standard-deviation normalization the difference is $-20.2$ and $-17.3$, and $-4\%$ of that under exact sampling.
OLMo-3-7B-Instruct shows the same signs with a logged drift of similar size but about an eighth of the entropy response, so the conversion from drift to entropy depends on the model (\cref{app:more}).

\subsection{Accuracy at 9B}
\label{sec:exp-acc}

\begin{table}[t]
\caption{Accuracy (\%) of the 9B model 10 steps after the fork (78.7\% held-out), mean over 4 seeds, evaluated with the truncated sampler. Held-out: 300 problems $\times1$; fixed: 32 $\times8$; bench.: mean of four benchmarks (\cref{tab:e6full}). $\Delta H$: change of probe-set entropy. Renorm.: renormalized learner.}
\label{tab:e6}
\vskip 0.05in
\centering
\footnotesize
\setlength{\tabcolsep}{3pt}
\begin{tabular}{@{}llrrrr@{}}
\toprule
Sampler & Rule & Held-out & Fixed & Bench. & $\Delta H$ \\
\midrule
Trunc. & \FAIL+\PEN{} & 75.3 & 72.3 & 56.4 & $+0.215$ \\
 & \quad+\NZ{} & 83.5 & 83.6 & 67.6 & $+0.016$ \\
\midrule
Exact & \FAIL+\PEN{} & 84.0 & 83.3 & 66.6 & $+0.007$ \\
 & \quad+\NZ{} & 84.2 & 82.7 & 66.3 & $+0.009$ \\
\midrule
Renorm. & \FAIL+\PEN{} & 84.2 & 81.3 & 66.4 & $+0.035$ \\
\bottomrule
\end{tabular}
\vskip -0.1in
\end{table}

Under truncated sampling (\cref{tab:e6}), \FAIL+\PEN{} ($\net\approx-0.07$) raises probe entropy by 0.215 nats over ten steps, and its held-out accuracy falls from 78.7\% to 75.3\%.
With \NZ{} the entropy change is 0.016 nats and accuracy rises to 83.5\%, the level that both rules reach under exact sampling.
The difference of $+8.2$ points is positive in all four seeds ($+6.0$ to $+11.7$; $p=0.007$ with the seed as the unit), its interval excludes zero on every benchmark (\cref{app:more}), and the two arms do not differ in over-length rate (0.19 and 0.20).
Under exact sampling the same change leaves held-out accuracy unchanged ($+0.2$ points), although one benchmark interval, for AMC23, excludes zero at $-2.2$; the interaction is $+7.9$ points on the held-out set ($[4.7,11.1]$, $p=0.004$).
Renormalizing the learner reaches the same accuracy without changing the rule: 84.2\%, $+8.8$ points over the uncorrected learner ($[2.6,15.1]$, positive in every seed) and within one point of exact sampling and of \NZ{}.
All arms start from a fork trained for 20 steps with the uncorrected rule, so part of every corrected arm's gain may be recovery from that drift, and the fork's accuracy is a single evaluation (s.e.\ about 2.4 points).
The \NZ{} effect persists when the optimizer state is reset at the fork (\cref{app:more}), but over a hundred steps at 2B, zeroing $\net$ is not enough (\cref{sec:exp-long}).

\subsection{Over long horizons the gap covariance feeds back}
\label{sec:exp-long}

\begin{figure*}[t]
\centering
\includegraphics[width=\textwidth]{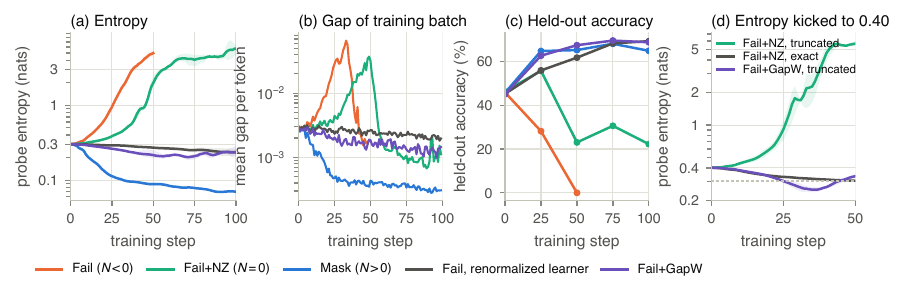}
\vskip -0.05in
\caption{Long 2B runs under truncated sampling (top-$k$ 20, top-$p$ 0.95; \FAIL{} to step 50; \cref{tab:long}). \textbf{(a)} Probe-set entropy (mean over seeds; bands: range). \textbf{(b)} Mean gap per token of the training batch (renormalized learner: untruncated softmax). \textbf{(c)} Held-out accuracy. \textbf{(d)} 50-step runs from logits divided by 1.2 (two seeds; dotted: initial entropy without the kick).}
\label{fig:long}
\vskip -0.1in
\end{figure*}

We run five 2B rules for 100 steps under truncated sampling with three seeds (two for the renormalized learner; \FAIL{} stops at step 50, once its entropy has saturated) and evaluate held-out accuracy every 25 steps (\cref{fig:long}).
\FAIL{} drives entropy above 0.5 nats at step 17 in all seeds, and its held-out accuracy falls from 45.8\% to 28.1\% at step 25 and to zero at step 50.
\FAIL+\NZ{} follows 17--18 steps later: it crosses 0.5 at steps 34--35 and reaches 4.5--7.3, while held-out accuracy drops from 55.7\% at step 25 to 23.0\% at step 50.
In both rules the batch gap rises from $3\times10^{-3}$ to $(4$--$8)\times10^{-2}$ before entropy nears its maximum (\cref{fig:long}b).
\MASK{} moves the other way and slows down, with its gap falling to $(2$--$3)\times10^{-4}$ and its entropy to 0.07 by step 100.
Adding to \FAIL+\NZ{} a fixed shift of $+0.05$, the 100-step mean net advantage of \GW{}, first makes the first-order drift negative, and entropy falls to 0.19--0.21 by steps 40--45.
The gap covariance then outgrows the net-advantage term, from 0.25 to 7.6 times its size in one seed and from 0.9 to 3.1 in the other, and entropy turns, reaching 0.87 and 0.40 by step 100, as \cref{prop:feedback} predicts.

Two controls tie this rise to the mismatch (\cref{fig:controls} in \cref{app:more}).
Under exact sampling \FAIL+\NZ{} never exceeds its initial entropy and reaches 68--71\% held-out accuracy.
Under top-$p$ 0.95 without top-$k$ the gap still doubles, but entropy peaks at 1.4--1.7 and then falls; the cap on the discarded mass bounds the per-step drift but does not by itself explain the fall.
To separate the feedback from a rise of entropy by itself, we raise the initial entropy by dividing the logits by 1.2 (probe entropy $0.30\to0.40$) and train \FAIL+\NZ{} for 50 steps (\cref{fig:long}d).
Under truncation entropy then crosses 1.0 at steps 25 and 28, against 43--46 from the unmodified start.
Under exact sampling it falls back to 0.30 and held-out accuracy reaches 64--68\%, so the raised entropy keeps rising only when the sampler is mismatched.

Cancelling the first-order drift at every step prevents the explosion within 100 steps without changing the learner.
With \GW{}, which cancels the drift group by group, entropy stays between 0.18 and 0.31 nats in three seeds, and it falls back from the raised start.
With a single shift per batch, recomputed so that the logged drift of the batch is zero, entropy ends at 0.14--0.20.
Held-out accuracy at step 100 is 68.7\%, 70.4\%, and 69.2\% for \GW{}, the per-batch shift, and the renormalized learner.
A fixed shift of $+0.13$ on \FAIL{}, as large as the mean per-batch shift, also keeps entropy bounded (final entropy 0.15 and 0.27; held-out 70.7\% and 67.7\%).
Its net advantage starts at the level of \NZ{}$+0.05$, which failed, but grows to about $+0.10$ as over-length failures become rarer, as do the net advantages of \GW{} and the per-batch shift.
\Cref{prop:feedback} requires a compensation that grows with the push, but 100 steps cannot tell growth from level.
Unlike a fixed shift, the adaptive corrections need no tuning.

%% file: sections/practice.tex
\section{Sampler Mismatch in Practice}
\label{sec:practice}

In a survey of 19 trainers in 18 RL frameworks, three inference engines, and 123 technical reports and public recipes (\cref{app:survey}), mismatch enters through framework defaults, inherited generation configs, and recipes; ROLL, for example, samples with $T=0.95$, top-$p$ 0.7, and top-$k$ 50 without dividing by $T$.
Of the 55 entries that disclose their training sampler, 7 (13\%) train with it uncorrected, and several frameworks added corrections only during 2026.
In single 300-step runs of a 1.5B model, entropy over the last 20 steps was 3.63 with top-$p$ 0.95 and top-$k$ 512, 0.69 with top-$p$ 1.0, and 0.54 after renormalization, although the uncorrected arm already used importance ratios against the truncated distribution~\citep{trl7627}.

We suggest four practices.
First, remove the mismatch on the learner's side: renormalize over the sampled support or center the score on the post-truncation distribution, and apply the sampling temperature in the learner, as a practitioner guide also recommends~\citep{fang2025praxis}.
Second, if only the advantages can be changed, cancel the first-order drift at every step with \GW{} or a per-batch shift; zeroing $\net$ is not enough, and a fixed shift needs hand tuning.
Third, do not rely on importance ratios or clipping under truncation, which left 98\% of the drift after one update and amplified it sevenfold over four.
Fourth, log $\net$ and the mean gap.

%% file: sections/related.tex
\section{Related Work}
\label{sec:related}

\paragraph{Entropy dynamics in RLVR.}
Entropy collapse is addressed with clip-higher~\citep{yu2025dapo}, entropy-targeted objectives~\citep{fade2026,entrocraft2026,qae2025,stare2026}, and adaptive controllers~\citep{zhang2025aer,yang2025entropic}, some of which change $\net$ by reweighting positive and negative samples.
On-policy, entropy follows the covariance between advantages and log-probabilities~\citep{cui2025entropy,wang2026entropydyn} and responds to clipping~\citep{park2025clip}, log-probability-dependent advantages~\citep{petrenko2026repo}, and positive-only reinforcement~\citep{zhu2025nsr}.
Our gap is the sampler expectation of the entropy polarity of \citet{zhang2026polarity}.
SCOPE-RL~\citep{scoperl2025} finds that positive samples raise entropy under high-temperature sampling, which the sign law predicts for a flatter sampler.

\paragraph{Baselines and mismatch.}
Without importance correction, a constant baseline adds a $\KL(\smp\|\pol)$ gradient signed by the mean advantage~\citep{leroux2016efficient,leroux2025topr}, acts as pessimism~\citep{cao2026anatomy}, and matters even on-policy~\citep{chung2021beyond}.
A high baseline makes the limit of a full-support behavior policy deterministic~\citep{arnal2025asymre}, but under truncation it raises entropy (\cref{thm:sign}).
\citet{marek2026score} formalize the drift for several kinds of mismatch, describe the pull toward the sampler after promising prefixes and the push after flawed ones, propose score centering, and note that group centering leaves the drift.
\Cref{prop:aggregate} identifies what remains, and \citet{han2026baselines} give a gradient-level split of the same kind.
\GW{} has the form of variance-optimal baselines~\citep{greensmith2004variance,hao2025opo,li2026otb}, and balanced aggregation removes the sign--length coupling behind $\net$~\citep{liu2025drgrpo,balanced2026}.
Drift can compound~\citep{marek2026score,modal2026mixed}, and cooler sampling without correction collapses~\citep{pan2026cool}; \cref{prop:feedback} shows that the compounding is asymmetric.

%% file: sections/conclusion.tex
\section{Conclusion}
\label{sec:conclusion}

Under a sampler sharper or flatter than the policy, a constant advantage shift moves entropy with a sign fixed by the sampler.
In group-based RLVR this shift is the token-weighted net advantage, which masking and failing push to opposite signs.
Because the remaining drift feeds back asymmetrically, zeroing the net advantage only delays an entropy explosion, which learner-side renormalization or per-step cancellation prevents in our 100-step runs.
The analysis is first order and leaves the size of the response to measurement.

%% file: sections/appendix.tex
\section{Proofs and Examples}
\label{app:proofs}

Throughout, $\pol$ is a distribution with full support on a finite vocabulary $\mathcal V$, $H=-\sum_v\pol_v\log\pol_v$, $u_v=\pol_v(\log\pol_v+H)$, and for a set $X\subseteq\mathcal V$ we write $m_X=\E_\pol[u\mid X]$.

\subsection{Entropy change at one position (\cref{eq:phi})}
With $\pol=\mathrm{softmax}(z)$, $\partial\pol_w/\partial z_v=\pol_w(\mathbf 1[w=v]-\pol_v)$, so
\[
\frac{\partial H}{\partial z_v}=-\sum_w(\log\pol_w+1)\,\pol_w(\mathbf 1[w=v]-\pol_v)=-\pol_v(\log\pol_v+1)+\pol_v(1-H)=-u_v.
\]
The gradient of $A\log\pol_a$ with respect to $z$ is $A(e_a-\pol)$, so a step of size $\eta$ changes entropy by $-\eta A\sum_v u_v(\mathbf 1[v=a]-\pol_v)+O(\eta^2)=-\eta A\,(u_a-\E_\pol[u])+O(\eta^2)$.
Note that $\E_\pol[u]=\sum_v\pol_v^2(\log\pol_v+H)=\Cov_\pol(\pol_v,\log\pol_v)\ge0$, since both arguments are increasing functions of $\pol_v$.

\subsection{Proof of \cref{thm:sign}}

Call $\keep\subseteq\mathcal V$ an \emph{upper set} if it contains every token whose probability exceeds $\min_{v\in\keep}\pol_v$, and a \emph{lower set} if its complement is an upper set.
Ties may be split arbitrarily.
For nonempty $\keep$ let $q_\keep=\pol\mathbf 1_\keep/\pol(\keep)$.

\begin{lemma}
\label{lem:trunc}
If $\pol$ is non-uniform and $\keep$ is an upper set with $0<\pol(\keep)<1$, then $\gap(q_\keep)>0$.
\end{lemma}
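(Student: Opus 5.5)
The plan is to reduce the statement to a sign property of partial sums. First write $\gap(q_\keep)$ in terms of conditional means. Let $\bar u\coloneqq\E_\pol[u]$. Since $\E_{q_\keep}[u]=m_\keep$, we get $\gap(q_\keep)=m_\keep-\bar u$. (Equivalently, since $\bar u=\pol(\keep)m_\keep+\pol(\keep^c)m_{\keep^c}$, this is $\pol(\keep^c)(m_\keep-m_{\keep^c})$, as in the proof idea of \cref{thm:sign}.) Define
\[
g(X)\coloneqq\sum_{v\in X}\pol_v(u_v-\bar u),
\]
so that $\gap(q_\keep)=g(\keep)/\pol(\keep)$. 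It therefore suffices to show $g(\keep)>0$. Note that $g(\emptyset)=g(\mathcal V)=0$.

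Next, order the tokens by decreasing $\pol_v$. Then an upper set $\keep$ is a proper nonempty prefix of this order, with ties split arbitrarily; tied tokens have equal terms, so the split does not matter. The goal becomes: partial sums of the sequence $c_v\coloneqq\pol_v(u_v-\bar u)$ are strictly positive on proper nonempty prefixes. The key step is the sign pattern of $c_v$, read from the shape of $f(p)=p(\log p+H)$:
\begin{itemize}
\item For $p<e^{-H}$ we have $f(p)<0\le\bar u$, using $\bar u\ge0$ from the covariance identity above. So every token with $\pol_v<e^{-H}$ has $c_v<0$.
\item On $p\ge e^{-H}$ we have $f'(p)=\log p+H+1>0$, so $u_v-\bar u$ is increasing in $\pol_v$ there.
\end{itemize}
Hence, in decreasing-probability order, the sequence $c_v$ is first nonnegative and then negative: a block of $+$ followed by a block of $-$, with a single sign change.

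For strictness I need two facts from non-uniformity. First, $\min_v\pol_v<e^{-H}$. This holds because $-H=\E_\pol[\log\pol]$ is a weighted mean of the $\log\pol_v$ that are not all equal, so the minimum lies strictly below the mean. Hence the last token (of least probability) has $c_v<0$. Second, the first token must have $c_v>0$. Otherwise the single-sign-change pattern forces all $c_v\le0$, and since the total is $0$ all $c_v=0$, contradicting the first fact. The partial-sum argument then runs as follows:
\begin{itemize}
\item If the prefix $\keep$ ends inside the nonnegative block, $g(\keep)\ge c_{\text{first}}>0$.
\item If it ends inside the negative block, $g(\keep)=-g(\keep^c)$, and $\keep^c$ consists only of negative terms and is nonempty because $\pol(\keep)<1$. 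So $g(\keep)>0$.
\end{itemize}

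The main obstacle is that $u_v$ is not monotone in $\pol_v$: it dips below zero and then returns to $0$ as $p\to0$. A naive argument that "$\keep$ contains the tokens with the largest $u$" therefore fails. The fix is to compare with the threshold $e^{-H}$, below which every term is negative regardless of the non-monotonicity. This is exactly the nested set $\{v:\pol_v\ge e^{-H}\}$ mentioned in the proof idea, together with $\bar u\ge0$.
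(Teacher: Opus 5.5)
Your proof is correct. It uses the same ingredients as the paper's: $p(\log p+H)$ is increasing on $[e^{-H},1]$ and negative below $e^{-H}$, $\E_\pol[u]\ge0$, and $\min_v\pol_v<e^{-H}$ by non-uniformity. You organize them differently. The paper proves $m_\keep>m_{\keep^c}$ by nesting $\keep$ with $U=\{v:\pol_v\ge e^{-H}\}$, the set where $u\ge0$, and splits into two cases. If $\keep\subseteq U$, it establishes the pointwise bound $\max_{\keep^c}u\le\min_\keep u$, with strictness coming from $v_{\min}\in\keep^c$. If $U\subseteq\keep$, it gets $m_{\keep^c}<0<m_\keep$, and uses $\E_\pol[u]\ge0$ only in this case. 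You instead use $\bar u\ge0$ up front to show that $u-\bar u$ crosses zero once, downward, along the decreasing-probability order, so your split point is effectively $\{v:u_v\ge\bar u\}$ rather than $U$. Because the centered terms $\pol_v(u_v-\bar u)$ sum to zero, both of your cases become one-liners. Your argument also shows that the only property really needed is this single crossing, so it applies verbatim to any function with that shape. The paper's first case gives a stronger pointwise domination, which this lemma does not need. One sentence needs tightening. The bare pattern ``nonnegative block, then negative block'' also allows a sign sequence such as $0,+,-$, so by itself it does not force all $c_v\le0$ when $c_{\mathrm{first}}\le0$. What gives $c_{\mathrm{first}}>0$ is your monotonicity on $U$: the most probable token lies in $U$ (since $\log\max_v\pol_v\ge\E_\pol[\log\pol]=-H$) and therefore maximizes $u$. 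Together with $u$ being non-constant ($u_{v_{\min}}<0\le u_{\mathrm{first}}$), full support then gives $u_{\mathrm{first}}=\max u>\E_\pol[u]$.
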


\begin{proof}
Since $\E_\pol[u]=\pol(\keep)m_\keep+\pol(\keep^c)m_{\keep^c}$, we have $\gap(q_\keep)=m_\keep-\E_\pol[u]=\pol(\keep^c)(m_\keep-m_{\keep^c})$, so it suffices to show $m_\keep>m_{\keep^c}$.
Let $U=\{v:\pol_v\ge e^{-H}\}$, the set where $u_v\ge0$, and let $m=\min_{v\in\keep}\pol_v$.
The sets $\keep$ and $U$ are nested: if $m\ge e^{-H}$ then $\keep\subseteq U$, and otherwise every $v\in U$ has $\pol_v\ge e^{-H}>m$ and hence $v\in\keep$.
Because $\pol$ is non-uniform, its least probable token $v_{\min}$ satisfies $\pol_{v_{\min}}<e^{-H}=\exp(\E_\pol[\log\pol])$, a weighted geometric mean, so $u_{v_{\min}}<0$.

\emph{Case $\keep\subseteq U$.}
Take $v\notin\keep$ and $w\in\keep$.
If $v\notin U$, then $u_v<0\le u_w$.
If $v\in U$, then $e^{-H}\le\pol_v\le m\le\pol_w$; the function $p\mapsto p(\log p+H)$ has derivative $\log p+1+H>0$ on $[e^{-H},1]$, so $u_v\le u_w$.
Hence $\max_{\keep^c}u\le\min_\keep u$, and since $v_{\min}\in\keep^c$ has $u_{v_{\min}}<0\le\min_\keep u$, the $\pol$-weighted mean satisfies $m_{\keep^c}<\min_\keep u\le m_\keep$.

\emph{Case $U\subseteq\keep$.}
Every $v\in\keep^c$ has $u_v<0$, so $m_{\keep^c}<0$ and $\sum_{v\in\keep}\pol_vu_v=\E_\pol[u]-\sum_{v\in\keep^c}\pol_vu_v>\E_\pol[u]\ge0$.
Hence $m_\keep>0>m_{\keep^c}$.
\end{proof}

\begin{proof}[Proof of \cref{thm:sign}]
Let $\rho_v=\smp_v/\pol_v$, so $\sum_v\pol_v\rho_v=1$.
By the layer-cake identity $\rho_v=\int_0^\infty\mathbf 1[\rho_v\ge s]\,ds$,
\[
\smp=\int_0^{\infty}\pol(\keep_s)\,q_{\keep_s}\,ds,\qquad \keep_s=\{v:\rho_v\ge s\},\qquad \int_0^\infty\pol(\keep_s)\,ds=1.
\]
Since $\gap(\smp)=\E_\smp[u]-\E_\pol[u]$ is linear in $\smp$ and the weights integrate to one, $\gap(\smp)=\int_0^\infty\pol(\keep_s)\,\gap(q_{\keep_s})\,ds$, where terms with $\keep_s=\emptyset$ have zero weight.
If $\smp$ is sharper than $\pol$, then $v\in\keep_s$ and $\pol_w>\pol_v$ imply $\rho_w\ge\rho_v\ge s$, so every $\keep_s$ is an upper set.
For $s\le\min_v\rho_v$, $\keep_s=\mathcal V$ and $\gap(q_{\keep_s})=0$.
For $s\in(\min_v\rho_v,\max_v\rho_v]$, $\keep_s$ is a nonempty upper set with $\pol(\keep_s)<1$, so $\gap(q_{\keep_s})>0$ by \cref{lem:trunc}.
Because $\smp\neq\pol$, $\rho$ is not constant and this interval has positive length; hence $\gap(\smp)>0$.
If $\smp$ is flatter than $\pol$, the same argument shows that every $\keep_s$ is a lower set.
For a nonempty lower set $R$ with $\pol(R)<1$, $R^c$ is an upper set and \cref{lem:trunc} gives $m_{R^c}>m_R$, so $\gap(q_R)=\pol(R^c)(m_R-m_{R^c})<0$; integrating gives $\gap(\smp)<0$.
\end{proof}

\paragraph{Which samplers are sharper.}
Top-$k$ keeps the $k$ most probable tokens; top-$p$ keeps the shortest prefix of the tokens sorted by probability whose mass reaches $p$; min-$p$ keeps tokens with $\pol_v\ge p_{\min}\max_w\pol_w$.
Each keeps an upper set, and applying them in sequence keeps an upper set of $\pol$ because renormalizing within a set preserves the order of probabilities.
For any truncation, $\rho=\mathbf 1_\keep/\pol(\keep)$, which satisfies the definition of sharper.
Temperature gives $\rho\propto\pol^{1/T-1}$, increasing for $T<1$ and decreasing for $T>1$.
Temperature followed by truncation gives $\rho\propto\pol^{1/T-1}\mathbf 1_\keep$ with $\keep$ an upper set, which is sharper for $T\le1$; for $T>1$ it rises and then drops to zero, so the sampler is neither sharper nor flatter and the sign of $\gap$ depends on $\pol$.
Mixing with the uniform distribution gives $\rho_v=(1-\epsilon)+\epsilon/(|\mathcal V|\pol_v)$, which is decreasing, hence flatter.

\subsection{Proof of \cref{prop:aggregate}}
In the softmax parameterization each distinct prefix $s$ has its own logits.
The surrogate gradient changes them by $\Delta z(s)=\eta\sum_{(i,t):s_{i,t}=s}\omega_iA_i(e_{y_{i,t}}-\pol(\cdot\mid s))$, and by \cref{eq:phi} the first-order change of $\sum_sH(s)$ over distinct visited prefixes is $\Delta H_{\mathrm{tot}}=-\eta\sum_i\omega_iA_i\sum_t\Phi_{i,t}(y_{i,t})$, where $\Phi_{i,t}$ is \cref{eq:phi} at prefix $s_{i,t}$.
Write $\Phi_{i,t}(y_{i,t})=\gap_{i,t}+(\Phi_{i,t}(y_{i,t})-\gap_{i,t})$.
The second part gives the on-policy term.
For the first part, within group $g$,
\[
\sum_{i\in g}\omega_iA_i\sum_t\gap_{i,t}=\sum_{i\in g}\omega_iL_iA_i\bar\gap_i=W_g\big(\net_g\bar\gap_g+\Cov_g(A,\bar\gap)\big),
\]
where the mean and covariance are weighted by $\omega_iL_i/W_g$.
Adding a constant $c$ to the advantages of group $g$ increases $\net_g$ by $c$, leaves $\Cov_g(A,\bar\gap)$ unchanged, and changes the on-policy term by $-\eta c\sum_{i\in g}\omega_i\sum_t(\Phi_{i,t}(y_{i,t})-\gap_{i,t})$, whose terms have zero mean given their prefixes because $y_{i,t}\sim\smp(\cdot\mid s_{i,t})$ and $\omega_i$ is fixed.
Under exact sampling $\gap_{i,t}=0$ at every prefix.

\paragraph{Example: the gap covariance survives $\net=0$.}
Let a response first choose one of two branches uniformly and then emit one more token; the reward is 1 on the first branch and 0 on the second, so all responses have equal length and every group has $\net_g=0$ under a group-mean baseline.
Let the policies at the two second positions be $(0.8,0.2)$ and $(0.99,0.01)$, both sampled with top-1 truncation.
With $G=8$, the expected first-order entropy change is $-0.0096\,\eta$: positive advantages land on the prefix with the larger gap, so the gap covariance is nonzero although \NZ{} leaves the advantages unchanged.
Score centering or support renormalization removes it.

\paragraph{Example: length-dependent weights bias exact sampling.}
Let the first token be drawn from $(0.8,0.2)$ without truncation; the first outcome ends the response ($L=1$) and the second adds one uniformly random token ($L=2$).
With $A\equiv c$, $G=8$, and token weights $\omega=1/\sum_iL_i$, the net advantage is the constant $c$ and there is no mismatch, yet the expected first-order entropy change is $-0.0060\,\eta c$.
The weight is larger in groups where more responses are short, so the update favors the short outcome.
This is the length bias of token-level aggregation~\citep{liu2025drgrpo}; it is of order $1/G$ and odd in $c$.

\subsection{Identities in \cref{tab:contracts}}
\begin{table}[htbp]
\caption{Net advantage of one group (uniform token weights except in the sequence-mean rows) and the direction of the net-advantage drift under a sharper sampler in our runs. Under \MASK{}, masked responses enter the baseline with zero reward; $M$ is the set of unmasked responses, $n_{\mathrm{mask}}=G-|M|$, and $\Cov_M$, $\bar L_M$, $\bar r_M$ are taken over $M$.}
\label{tab:contracts}
\vskip 0.05in
\centering
\footnotesize
\setlength{\tabcolsep}{3pt}
\begin{tabular}{@{}lll@{}}
\toprule
Rule & $\net_g$ & Drift \\
\midrule
Group mean & $\Cov_g(r,L)/\bar L_g$ & by sign \\
\MASK{} & $\tfrac{\Cov_M}{\bar L_M}+\tfrac{n_{\mathrm{mask}}}{G}\bar r_M$ & $H\!\downarrow$ \\
\FAIL{}, \PEN{} & $\Cov_g/\bar L_g$ & $H\!\uparrow$ \\
Length-weighted (\NZ) & $0$ & none \\
Seq.-mean agg., \FAIL{} & $0$ & none \\
Seq.-mean agg., \MASK{} & $\tfrac{n_{\mathrm{mask}}}{G}\bar r_M$ & $H\!\downarrow$ \\
\bottomrule
\end{tabular}
\vskip -0.1in
\end{table}

Let all responses in group $g$ share the token weight $\omega_g$, and let $\Cov_g$ and $\bar L_g$ be the population covariance and mean over the group.
\begin{itemize}
\item \emph{Group-mean baseline.} $\net_g=\sum_iL_i(r_i-\bar r)/\sum_iL_i=(\frac1G\sum_iL_ir_i-\bar L_g\bar r)/\bar L_g=\Cov_g(r,L)/\bar L_g$. Dividing advantages by $\sigma_g$ divides $\net_g$ by $\sigma_g$. The batch-level $\net$ is the average of $\net_g$ with weights $W_g$, which are equal across groups under prompt-mean aggregation.
\item \emph{\MASK.} Masked responses have $r_i=0$ and enter only the baseline $\bar r_G=\frac1G\sum_{i\in M}r_i=\frac{|M|}{G}\bar r_M$, where $M$ is the set of trained responses. Then $\net_g=\sum_{i\in M}L_i(r_i-\bar r_G)/\sum_{i\in M}L_i=\Cov_M(r,L)/\bar L_M+\bar r_M-\bar r_G=\Cov_M(r,L)/\bar L_M+\frac{n_{\mathrm{mask}}}{G}\bar r_M$. The second term is nonnegative when rewards are nonnegative, as in all our runs. The first term can be negative, so $\net_g$ itself need not be positive.
\item \emph{\NZ.} With $b_g=\sum_jL_jr_j/\sum_jL_j$, $\net_g=\sum_iL_i(r_i-b_g)/\sum_iL_i=0$.
\item \emph{Sequence-mean-token-mean.} With $\omega_i\propto1/L_i$, $\net_g$ is the plain mean of the trained responses' advantages, which is zero when no responses are masked because group-mean advantages sum to zero within each group; under \MASK{} it is $\bar r_M-\bar r_G=\frac{n_{\mathrm{mask}}}{G}\bar r_M$.
\end{itemize}

\subsection{Proof of \cref{prop:feedback}}
Write $p_h=(1-x)/k$ and $p_t=x/(V-k)$ for the head and tail probabilities, so $\Lambda=\log(p_h/p_t)$.
Under top-$k$ the sampler is $\smp=\pol\mathbf 1_\keep/(1-x)$, so $\smp-\pol$ equals $p_hx/(1-x)=x/k$ on every head token and $-p_t=-x/(V-k)$ on every tail token.
The update therefore keeps the head logits equal and the tail logits equal, and $\dot\Lambda=\eta c\kappa x$.
Since $x=(V-k)/(ke^\Lambda+V-k)$, $\mathrm dx/\mathrm d\Lambda=-x(1-x)$ and $\dot x=-\eta c\kappa x^2(1-x)$.
For the gap, $H=-(1-x)\log p_h-x\log p_t$ gives $\log p_h+H=x\Lambda$ and $\log p_t+H=-(1-x)\Lambda$, so $u=p_hx\Lambda$ on the head and $u=-p_t(1-x)\Lambda$ on the tail.
The truncation formula $\gap=x\,(\E_\pol[u\mid\keep]-\E_\pol[u\mid\keep^c])$ then gives $\gap=x\big(p_hx\Lambda+p_t(1-x)\Lambda\big)=\kappa\Lambda x^2(1-x)$.
As a check, $\mathrm dH/\mathrm dx=\Lambda$, so $\dot H=\Lambda\dot x=-\eta c\,\gap$, as \cref{eq:gap} requires.
For (ii), $\int\mathrm dx/(x^2(1-x))=-1/x+\log(x/(1-x))$; with $c<0$, integrating from $x_0$ to $1/2$ gives $t_{1/2}$.
With $c>0$, the same integral gives $1/x(t)-\log\frac{x(t)}{1-x(t)}=1/x_0-\log\frac{x_0}{1-x_0}+\eta c\kappa t$, so $1/x(t)=\Theta(t)$.
For (iii), $\dot\Lambda=\rho-\eta|c|\kappa x$ has the sign of $x^\ast-x$ with $x^\ast=\rho/(\eta|c|\kappa)$; since $x$ decreases in $\Lambda$, $x>x^\ast$ lowers $\Lambda$ and raises $x$ further, and $x<x^\ast$ does the reverse.

\begin{figure}[htbp]
\centering
\includegraphics[width=0.62\linewidth]{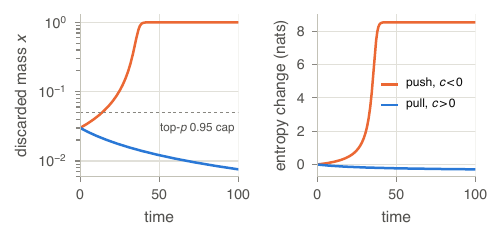}
\caption{\Cref{prop:feedback} in the two-level model ($V=151{,}000$, $k=20$, $x_0=0.03$): discarded mass and entropy change under a push ($c<0$) and a pull ($c>0$) of the same size, with time in units of $1/(\eta|c|\kappa k)$. The push reaches the uniform policy in finite time; the pull slows down. Dashed: the cap on the discarded mass under top-$p$ 0.95.}
\label{fig:feedback}
\end{figure}

\paragraph{Two classes of prefixes.}
Take one two-level prefix of each of two classes, visited at rates $n_a$ and $n_b$ and receiving advantages $-c_a<0$ and $+c_b>0$, so that $\dot\Lambda_j=\eta n_jA_j\kappa x_j$ and $\dot H_j=-\eta n_jA_j\gap_j$, and let total entropy be $H_a+H_b$.
By (ii), $x_a$ reaches $1/2$ by time $t_{1/2}(x_{a,0},n_ac_a)$ whatever $c_b$ is, while $H_b$ cannot fall below $\log k$.
Total entropy therefore exceeds its initial value once $H_a(1/2)-H_a(0)>H_b(0)-\log k$, even if $n_bc_b\gap_b>n_ac_a\gap_a$ at $t=0$ and entropy first falls; this holds when the pulled class starts close to its head, as confident prefixes do (for $V=151{,}000$, $k=20$, $x_{a,0}=0.03$, and $x_{b,0}=0.02$, the two sides are 4.8 and 0.3 nats).
If entropy is instead averaged with weights $n_a$ and $n_b$, the condition becomes $n_a(H_a(1/2)-H_a(0))>n_b(H_b(0)-\log k)$, which fails when the pushed class is rare enough.
Adding a constant $s<c_a$ to both advantages replaces $c_a$ by $c_a-s$ in $t_{1/2}$, which delays but does not prevent the takeoff.
A baseline that sets $n_aA_a\gap_a+n_bA_b\gap_b=0$ at every instant, which is \GW{} for one group containing both classes and also the per-batch shift of \cref{sec:exp-long}, gives $\dot H_a+\dot H_b=0$.
Under top-$p$ instead of top-$k$, $\|\smp-\pol\|_1=2\pol(\keep^c)\le2(1-p)$ for every policy, so the logit drift is bounded uniformly in the state.

\subsection{Proof of \cref{prop:fixes}}
(a) follows from \cref{eq:aggregate} with every $\net_g=0$.
For (b), the proof of \cref{prop:aggregate} gives $D_{\mathrm{net}}+D_{\mathrm{cov}}=-\eta\sum_g\sum_{i\in g}\omega_iL_iA_i\bar\gap_i$; with $A_i=r_i-b_g$ and $b_g$ as stated, each group's sum is zero. By \cref{thm:sign} every $\gap_{i,t}\ge0$ under a sharper sampler (zero where $\smp=\pol$ at that prefix), so the weights $\omega_iL_i\bar\gap_i$ are nonnegative.
For (c), $\nabla\log\pol(\keep)=\sum_{v\in\keep}\nabla\pol_v/\pol(\keep)=\sum_{v\in\keep}q_v\nabla\log\pol_v=\E_{v\sim\smp}[\nabla\log\pol(v)]$ because $\smp=q_\keep$; hence $\E_{y\sim\smp}[\nabla\log\pol_\keep(y)]=0$.
Score centering replaces $\nabla\log\pol(y)$ by $\nabla\log\pol(y)-\E_{v\sim\smp}[\nabla\log\pol(v)]$~\citep{marek2026score}, which is the same vector.
In logit space the update becomes $\eta A(e_y-\smp)$, so the first-order entropy change at the prefix is $-\eta A(u_y-\E_\smp[u])$, whose mean under $\smp$ is zero; both mismatch terms of \cref{eq:aggregate} therefore vanish.
For temperature mismatch, the analogous statement holds for a learner that computes $\log\mathrm{softmax}(z/T)$, whose distribution then equals $\smp$.
For (d), $\E_{y\sim\smp}[w(y)c\nabla\log\pol(y)]=c\sum_{y\in\keep}\smp_yw_y\nabla\log\pol_y$.
With $w=\pol/\smp=\pol(\keep)$ on $\keep$ this is $c\sum_{y\in\keep}\nabla\pol_y=c\nabla\pol(\keep)=\pol(\keep)\,c\nabla\log\pol(\keep)=\pol(\keep)\E_\smp[c\nabla\log\pol]$.
In the softmax parameterization the expected logit change is $c\sum_{y\in\keep}a_y(e_y-\pol)$ with $a_y=\smp_yw_y$.
If it vanished, its coordinate at any $v\notin\keep$ (which exists since $\pol(\keep)<1$) would give $\pol_v\sum_ya_y=0$, so $\sum_ya_y=0$, and then each coordinate $y\in\keep$ would give $a_y=0$; hence only $w\equiv0$ makes the expected logit update vanish.
The statement concerns the logit update: weights that depend on $\Phi(y)$ could cancel the first-order entropy change at one prefix while leaving the update biased, but importance ratios are not of this form.

\paragraph{Example: sequence-level ratios can reverse the drift.}
Let the first token be drawn from $(0.8,0.2)$ without truncation and the second from $(0.51,0.49)$ after the first outcome and from $(0.99,0.01)$ after the second, both with top-1 truncation; let $A\equiv1$ with fixed weights.
The expected first-order entropy change is $-0.0080\,\eta$ without importance ratios and $-0.0042\,\eta$ with exact token-level ratios, which rescale it.
With the exact sequence-level ratio, the product of the kept masses (0.51 or 0.99) weights the first token as well and favors the branch with the larger kept mass, and the change becomes $+0.0299\,\eta$.

\section{Numerical Validation of the Theory}
\label{app:numerics}

\paragraph{Sign law.}
We evaluated $\gap$ on 14{,}400 random distributions and on 4{,}920 adversarially constructed ones (a single dominant token, a plateau of medium-probability tokens, a long tail, and many ties), for every prefix truncation, temperatures $T\in\{0.3,0.7,0.95,1.05,1.5,3\}$, and uniform mixtures, and additionally minimized $\gap$ numerically over sharper samplers.
No sharper sampler gave $\gap<0$; the smallest value, $-8\times10^{-16}$, is rounding error.

\paragraph{Exact single-position computation.}
We take $|\mathcal V|=1000$, give the top token probability $\pol_1=0.9$, and spread the remaining mass over the tokens of rank $k=2,\dots,1000$ proportionally to $k^{-1.1}$.
Enumerating the expected one-step entropy change exactly (no sampling noise), with $A\equiv c$ and $\eta=0.1$, the odd part of $\E[\Delta H]/\eta$ equals $-c\,\gap$ to within $10^{-4}$: $\gap=0.0595$ for top-$k$ 20 followed by top-$p$ 0.95, and $0.0314$ for top-$p$ 0.95; under exact sampling the odd part is $2\times10^{-5}$, and only an even $O(\eta)$ term remains.
With the same tail, single-token supports give small gaps that vanish as the top token dominates: $\gap=0.015$ at $\pol_1=0.957$ and $0.002$ at $\pol_1=0.985$.

\paragraph{Tabular RLVR simulation.}
We built a synthetic RLVR task with 8 problems and a tabular autoregressive policy whose context is (problem, position, running sum mod 5, a derailment flag); responses are correct if they stop with an allowed residue, and a derailing token leads to an over-length response, so over-length responses are long and low-confidence, as in our language-model data.
With groups of 8, a group-mean baseline, prompt-mean aggregation, and top-$p$ 0.9 sampling, the first-order decomposition predicts the realized entropy change over 300 SGD steps to within 3\%, and the mismatch drift has the sign of $-\net$.
Under Adam, the dose-response slope of entropy change on $\net$ is $-0.41$ under truncated sampling and $-0.04$ under exact sampling (5 seeds).

\section{Experimental Details}
\label{app:details}

\begin{table}[htbp]
\caption{Training and evaluation configuration.}
\label{tab:config}
\centering
\footnotesize
\begin{tabular}{@{}p{0.2\textwidth}p{0.3\textwidth}p{0.44\textwidth}@{}}
\toprule
 & 2B & 9B \\
\midrule
Model & Qwen3.5-2B (non-thinking) & MiMo-V2.6-Distill-Qwen-9B \\
Start & released checkpoint (kicked runs: logits divided by 1.2) & step 20 of an earlier run with the same trainer (weights and Adam state) \\
Groups per step $\times$ group size & $16\times8$ & $32\times8$ \\
Steps & 15 (temperature runs: 10--15; long runs: 100; kicked: 50) & 10 (step 20 $\to$ 30) \\
Generation limit & 2{,}048 & 8{,}192 \\
Reward & binary & binary with \PEN{} ($r=\mathbf 1[\text{correct}]-\max(0,(L-6000)/2192)$ for completed responses) \\
Optimizer & \multicolumn{2}{p{0.74\textwidth}}{AdamW, lr $10^{-6}$, $\beta=(0.9,0.999)$, weight decay 0.01, gradient clipping 1.0} \\
Objective & \multicolumn{2}{p{0.74\textwidth}}{dual-clip PPO (0.2, 3.0), one step per batch, ratio $=1$ (one control: four minibatch steps), no KL or entropy term, prompt-mean aggregation} \\
Seeds & 3 per cell; $T{=}0.7$ and new controls: 2; long runs: 3 (renormalized, fixed shifts, top-$p$, exact: 2) & 4 per cell; \MASK{}: 1 \\
Entropy probe & 64 held-out prompts $\times1$ response & 64 held-out prompts $\times1$ response \\
\bottomrule
\end{tabular}
\end{table}

\paragraph{Data.}
The pool contains 7{,}496 math problems with verifiable answers.
We hold out 750 problems from training; the held-out evaluation set is a fixed random subset of 300 of them, and the 64 probe prompts are drawn from that subset.
The fixed evaluation set contains 32 problems, each sampled 8 times; 3 of them are also probe prompts.

\paragraph{Samplers.}
Rollouts are generated by vLLM.
Truncated sampling applies top-$k$ 20 and then top-$p$ 0.95 at $T=1$.
Support renormalization recomputes the kept set from the learner's logits with the same rule and renormalizes the learner's log-probability of the sampled token over the kept set joined with that token; the sampled token falls outside the recomputed set for about 0.06\% of tokens, because of numerical differences between engine and learner.
For $T=0.7$ and $T=1.2$, no truncation is applied and the learner computes log-probabilities at $T=1$.

\paragraph{Probe set and entropy.}
The probe set contains one response per prompt for 64 held-out prompts, sampled once from the starting policy with truncated sampling.
Before every update we compute, by teacher forcing, the full-softmax entropy at every response token and average over tokens.
Pure-shift derivatives on a probe set sampled with exact sampling agree with those on the truncated probe set (\cref{app:more}).

\paragraph{Grading and statistics.}
Answers are extracted from the final \texttt{\textbackslash boxed\{\}} and graded with math-verify.
All 9B evaluations, including those of runs trained with exact sampling, use truncated sampling so that arms are compared under the same evaluation protocol.
For accuracy we treat the training seed as the unit: we compute the paired difference between rules for each seed (arms with the same seed share prompts and sampling seeds) and report the mean, a $t$-based 95\% interval with 3 degrees of freedom, and a two-sided paired $t$-test.
For the dose-response we fit the slope separately for each seed and report the mean and standard error over seeds.
The batch net advantage $\net$ weights groups by $W_g$ (\cref{sec:aggregate}), and a run's $\net$ is its mean over training steps; temperature runs last 10--15 steps.

\paragraph{Pure-shift probes.}
The directional derivative is $\langle\nabla_\theta H_{\mathrm{probe}},\,\sum_i\omega_i\sum_t\nabla_\theta\log\pol_\theta(y_{i,t}\mid s_{i,t})\rangle$ evaluated on a training batch of 32 groups split into four sub-batches.
The Adam version replaces the gradient by the update direction $D\odot g$, where $D$ is the diagonal preconditioner from the second-moment estimate stored at the fork, scaled to one step at learning rate $10^{-6}$.
The odd-part experiment trains two chains with $A\equiv+1$ and $A\equiv-1$ on identical prompts and sampling seeds, so their first updates are exactly opposite.

\paragraph{Compute.}
2B experiments ran on 8$\times$H100 nodes with one run per GPU; 9B experiments ran on single H200 and 8$\times$H100 nodes.

\section{Additional Results}
\label{app:more}

\begin{table}[htbp]
\caption{Mismatch drift on three 9B training batches sampled with top-$k$ 20 and top-$p$ 0.95 at the fork point, split as in \cref{prop:aggregate} into the net-advantage part and the gap covariance, computed from the measured gap at every sampled prefix with prompt-mean weights. Columns 4--5 are the sums in \cref{eq:aggregate}; the entropy change is $-\eta$ times them, so negative entries raise entropy. The same rollouts are scored under each rule.}
\label{tab:decomp}
\centering
\small
\begin{tabular}{@{}llrrrr@{}}
\toprule
Batch & Rule & Mean $\net_g$ & $\sum_gW_g\net_g\bar\gap_g$ & $\sum_gW_g\Cov_g(A,\bar\gap)$ & Net share \\
\midrule
1 & \FAIL{} & $-0.077$ & $-3.6\times10^{-4}$ & $-1.2\times10^{-4}$ & 76\% \\
 & \PEN{} & $-0.079$ & $-3.7\times10^{-4}$ & $-1.1\times10^{-4}$ & 76\% \\
 & \MASK{} & $+0.164$ & $+6.9\times10^{-4}$ & $-1.1\times10^{-4}$ & 119\% \\
2 & \FAIL{} & $-0.054$ & $-2.4\times10^{-4}$ & $-0.7\times10^{-4}$ & 78\% \\
 & \PEN{} & $-0.058$ & $-2.5\times10^{-4}$ & $-0.7\times10^{-4}$ & 78\% \\
 & \MASK{} & $+0.119$ & $+6.1\times10^{-4}$ & $-0.6\times10^{-4}$ & 110\% \\
3 & \FAIL{} & $-0.022$ & $-1.1\times10^{-4}$ & $-0.6\times10^{-4}$ & 66\% \\
 & \PEN{} & $-0.043$ & $-1.9\times10^{-4}$ & $-0.8\times10^{-4}$ & 71\% \\
 & \MASK{} & $+0.119$ & $+4.7\times10^{-4}$ & $-0.5\times10^{-4}$ & 112\% \\
\bottomrule
\end{tabular}
\end{table}

\begin{table}[htbp]
\caption{Probes of a pure advantage shift. Derivative: first-order change of probe-set entropy along the update of $A\equiv1$ on one training batch, in nats per unit SGD learning rate, or in $10^{-3}$ nats per Adam step at learning rate $10^{-6}$ with the optimizer state of the fork (mean $\pm$ s.e.\ over four sub-batches). Odd part: $\frac12[H(+1)-H(-1)]$ after five steps with $A\equiv\pm1$.}
\label{tab:probes}
\vskip 0.05in
\centering
\footnotesize
\setlength{\tabcolsep}{3pt}
\begin{tabular}{@{}llrrr@{}}
\toprule
Probe & Model & Truncated & Exact & Ratio \\
\midrule
SGD deriv. & 9B & $-10.3\pm1.1$ & $-0.41\pm0.14$ & 4\% \\
Adam deriv. & 9B & $-125\pm13$ & $-4.7\pm1.9$ & 4\% \\
SGD deriv. & 2B & $-10.7\pm0.4$ & $-0.03\pm0.30$ & ${<}6\%$ \\
Odd part & 2B & $-0.079$ & $-0.0010$ & 1.3\% \\
\bottomrule
\end{tabular}
\vskip -0.1in
\end{table}

\paragraph{Pure-shift probes and the gap-covariance intervention.}
\Cref{tab:probes} gives the pure-shift probes of \cref{sec:exp-exists}; the first-order change of one Adam step at learning rate $10^{-6}$ equals that of an SGD step with learning rate about 0.012, consistent with the learning rate of 0.007 that matches Adam's 15-step slope in the SGD runs below, and the truncated-to-exact ratio is 4\% under both optimizers; over five steps with $A\equiv\pm1$ the odd part is $-0.079$ and the even part $+0.008$ under truncation, against $-0.0010$ and $-0.0004$ under exact sampling.
For the intervention of \cref{sec:exp-predicts} we keep every $\net_g=0$ and add $\kappa\zeta_i$ with $\kappa=0.3$, where $\zeta_i$ is the response's mean log-probability standardized within its group with zero length-weighted mean, so $\net$ is unchanged; the interaction is about 1.8 times the amount predicted from the logged change in $D_{\mathrm{cov}}$.
The truncated dose-response line crosses zero at $\net=+0.025$, and \MASK{} ($\net=+0.30$, outside the tested range) lowers entropy less than the line implies, because its $D_{\mathrm{cov}}$ points against its $D_{\mathrm{net}}$ and masking removes over-length responses.

\begin{table}[htbp]
\caption{Entropy slope ($10^{-3}$ nats/step) of 2B runs under three advantage rules and three samplers (mean $\pm$ s.e.\ over 3 seeds). Rel.: paired \MASK$-$\FAIL{} difference relative to its value under truncated sampling. Under truncation the arms do not differ in batch accuracy (0.56--0.59) or over-length rate (0.27--0.31).}
\label{tab:e4}
\vskip 0.05in
\centering
\footnotesize
\setlength{\tabcolsep}{3pt}
\begin{tabular}{@{}lrrrr@{}}
\toprule
Sampler & \MASK{} & \FAIL+\NZ{} & \FAIL{} & Rel. \\
\midrule
Mean $\net$ & $+0.30$ & $0$ & $-0.08$ & \\
\midrule
Truncated & $-9.4\pm0.2$ & $+2.4\pm0.4$ & $+11.9\pm0.6$ & 100\% \\
Exact & $-0.4\pm0.5$ & $-1.0\pm0.3$ & $-1.1\pm0.2$ & $-3\%$ \\
Renormalized & $+0.1\pm0.1$ & $-0.3\pm0.1$ & $-0.7\pm0.3$ & $-4\%$ \\
\bottomrule
\end{tabular}
\vskip -0.1in
\end{table}

\paragraph{Logged drift and entropy slope across rules.}
\Cref{fig:drift} plots the mean logged first-order drift over the first 15 steps against the 15-step entropy slope for every 2B run under the truncated sampler with logged gaps (35 runs, 13 rules): \MASK{}, \FAIL{}, and \FAIL+\NZ{} (first 15 steps of the long runs, and \FAIL+\NZ{} also from the 15-step runs), \FAIL+\NZ{} with the covariance intervention and with the fixed shift of $+0.05$, \FAIL+\GW{} (15- and 100-step runs), the surprisal-weighted baseline, \MASK{} and \FAIL{} under sequence-mean aggregation, and \MASK{} and \FAIL{} with exact and with clipped importance ratios.
A single line fits all of them ($r=0.97$; slope $0.198\times10^{-3}$ nats per step per $10^{-5}$ of drift, intercept $+0.14$).
Fitted without \MASK{}, the line is steeper ($0.264$, intercept $-1.28$) and predicts $-15.6$ for \MASK{} against the measured $-9.7$, still much closer than the net-advantage line ($-29$).
The six \MASK{} runs with shifted advantages, run after the fit, fall within $4\times10^{-3}$ of the line.
The response flattens at large negative drift; under SGD it does not (next paragraph), so this curvature comes from Adam.
The 12 runs with top-$p$ 0.99 and 0.8 are excluded: across samplers the response is concave (\cref{sec:exp-sign}), and a line fitted to them underpredicts the truncated runs by a factor of two.

\begin{figure}[t]
\centering
\includegraphics[width=0.62\linewidth]{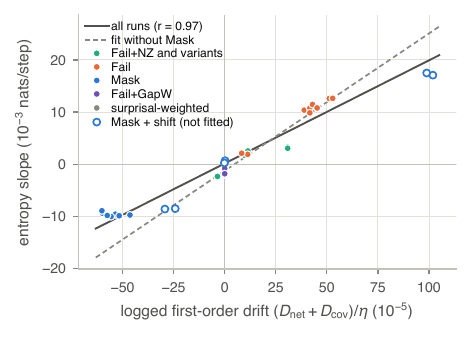}
\caption{Logged first-order mismatch drift (mean over the first 15 steps) against the 15-step entropy slope, 35 runs of 2B under top-$k$ 20 and top-$p$ 0.95. Solid: fit to all runs; dashed: fit without \MASK{}. Open markers: \MASK{} with advantages shifted by $-\lambda\net_g$, $\lambda\in\{0.5,1,1.5\}$, run after the fit.}
\label{fig:drift}
\end{figure}

\paragraph{How much of the mismatch drift the net advantage carries.}
\Cref{tab:decomp} evaluates the two mismatch terms of \cref{prop:aggregate} on real 9B batches, using the measured gap at every sampled prefix.
The gap covariance is negative under every rule, because failed responses visit prefixes with larger gaps, so it raises entropy.
Under \FAIL{} and \PEN{} it adds to the net-advantage drift, which carries 66--78\% of the total; under \MASK{} it offsets 9--16\% of the net-advantage drift.

\paragraph{From visited prefixes to held-out prompts.}
\Cref{prop:aggregate} describes entropy at the visited prefixes, whereas we measure it on held-out probe prompts.
On one 9B training batch, the ratio of the first-order effect of the \FAIL+\PEN{} rule to that of a pure shift is $-0.101$ when predicted from the per-response gaps and $-0.081$ when measured on probe-set entropy; for a counterfactual \MASK{} rule on the same samples the two ratios are $0.196$ and $0.178$.

\paragraph{Same samples, different rules.}
On the same batch, the first-order change of probe entropy under \FAIL+\PEN{} is $(+0.87\pm0.27)\times10^{-3}$ per unit SGD step, against $(-0.005\pm0.12)\times10^{-3}$ under \FAIL+\PEN+\NZ{}, which differs from it only by group-wise shifts; the counterfactual \MASK{} rule ($\net=+0.19$) gives $(-1.91\pm0.63)\times10^{-3}$.

\paragraph{Pure-shift directional derivatives on two probe sets.}
For the 9B model, the derivative under truncated training samples is $-10.3\pm1.1$ on the truncated probe set and $-11.2\pm1.2$ on an exact-sampling probe set; under exact training samples it is $-0.41\pm0.14$ and $-0.48\pm0.19$.
For 2B, the corresponding values are $-10.7\pm0.4$ and $-11.2\pm0.4$ (truncated) and $-0.03\pm0.30$ and $-0.01\pm0.32$ (exact).

\begin{table}[htbp]
\caption{Dose-response on 2B under truncated sampling (\cref{fig:sign}a): mean over 3 seeds.}
\label{tab:dose}
\centering
\small
\begin{tabular}{@{}lrrrrrrr@{}}
\toprule
$\lambda$ & $-1$ & $-0.5$ & $0$ & $0.5$ & $1$ & $1.5$ & $2$ \\
\midrule
Mean $\net$ & $-0.157$ & $-0.121$ & $-0.080$ & $-0.037$ & $0$ & $+0.034$ & $+0.066$ \\
Entropy slope ($10^{-3}$ nats/step) & $+18.8$ & $+15.9$ & $+11.9$ & $+6.9$ & $+2.4$ & $-1.5$ & $-4.1$ \\
\bottomrule
\end{tabular}
\end{table}

\begin{table}[htbp]
\caption{Temperature mismatch on 2B (\cref{fig:sign}b): entropy slope ($10^{-3}$ nats/step) per seed, with the truncated and exact means of \cref{tab:e4} for reference.}
\label{tab:temp}
\centering
\small
\begin{tabular}{@{}lrrrr@{}}
\toprule
Sampler & \MASK{} & \FAIL+\NZ{} & \FAIL{} & \MASK$-$\FAIL{} \\
\midrule
$T=0.7$ & $-13.6$, $-13.4$ & $+4.9$, $+3.7$ & $+21.5$, $+19.1$ & $-35.2$, $-32.6$ \\
$T=1.2$ & $+20.3$, $+21.4$, $+22.8$ & $-7.3$, $-5.8$, $-6.8$ & $-12.1$, $-11.1$, $-11.5$ & $+32.4$, $+32.5$, $+34.3$ \\
Truncated (mean) & $-9.38$ & $+2.43$ & $+11.94$ & $-21.3$ \\
Exact (mean) & $-0.38$ & $-0.95$ & $-1.08$ & $+0.70$ \\
\bottomrule
\end{tabular}
\end{table}

\paragraph{9B arms.}
Batch accuracy is computed on qualifying groups only (at least one correct and one failed response), which compresses differences between arms. Mean net advantage, over-length rate, and batch accuracy are $-0.070$, 0.19, and 0.67 for truncated \FAIL+\PEN{}; $0$, 0.20, and 0.67 with \NZ{}; $-0.090$, 0.15, and 0.68 for exact \FAIL+\PEN{}; and $0$, 0.14, and 0.69 with \NZ{}.
The single-seed \MASK{} runs have $\net=+0.155$ (truncated) and $+0.168$ (exact); probe entropy falls from 0.496 to 0.430 under truncation and rises to 0.526 under exact sampling; under truncation this reproduces the 2B ordering \MASK{} $<$ \FAIL+\NZ{} $<$ \FAIL{}.
The renormalized learner has $-0.077$, 0.20, and 0.67; with the optimizer state reset, \FAIL+\PEN{} has $-0.072$, 0.18, and 0.68 and \NZ{} has $0$, 0.23, and 0.66 (two seeds each).
With the reset, held-out accuracy is 76.0\% for \FAIL+\PEN{} and 83.5\% for \NZ{}, so \NZ{} exceeds \FAIL+\PEN{} by 6.3 and 8.7 points in the two seeds, against 8.0 and 11.7 with the inherited state; probe entropy changes by $+0.206$ and $-0.042$ nats. The reset \NZ{} runs lose entropy, unlike the inherited ones ($+0.016$); we did not test why.

\begin{table*}[htbp]
\caption{Per-benchmark accuracy (\%) of the 9B model after 10 steps from the fork, mean over 4 seeds; 95\% intervals over seeds are given in the next paragraph. In-distribution sets use an 8{,}192-token limit (held-out: 300 problems $\times1$; fixed: 32 problems $\times8$); public benchmarks use 16{,}384 tokens (MATH-500 $\times1$; AIME and AMC $\times4$). All evaluations use the truncated sampler. $\Delta H$: change of probe-set entropy (nats). Differences are computed before rounding. Renorm.: truncated sampling with the learner renormalized over the sampled support. The fork point has 78.7\% held-out accuracy.}
\label{tab:e6full}
\vskip 0.05in
\centering
\footnotesize
\begin{tabular}{@{}llrrrrrrr@{}}
\toprule
Sampler & Rule & Held-out & Fixed & MATH-500 & AIME24 & AIME25 & AMC23 & $\Delta H$ \\
\midrule
Truncated & \FAIL+\PEN{} & 75.3 & 72.3 & 85.0 & 36.9 & 32.1 & 71.7 & $+0.215$ \\
 & \FAIL+\PEN+\NZ{} & 83.5 & 83.6 & 92.4 & 48.8 & 41.7 & 87.5 & $+0.016$ \\
 & \emph{difference} & $\mathbf{+8.2}$ & $\mathbf{+11.3}$ & $+7.5$ & $+11.9$ & $+9.6$ & $+15.8$ & \\
\midrule
Exact & \FAIL+\PEN{} & 84.0 & 83.3 & 92.2 & 49.6 & 40.2 & 84.4 & $+0.007$ \\
 & \FAIL+\PEN+\NZ{} & 84.2 & 82.7 & 91.8 & 50.6 & 40.6 & 82.2 & $+0.009$ \\
 & \emph{difference} & $+0.2$ & $-0.6$ & $-0.3$ & $+1.0$ & $+0.4$ & $-2.2$ & \\
\midrule
Renorm. & \FAIL+\PEN{} & 84.2 & 81.3 & 91.5 & 50.0 & 39.4 & 84.7 & $+0.035$ \\
\bottomrule
\end{tabular}
\vskip -0.1in
\end{table*}

\paragraph{9B intervals and single-token positions.}
The 95\% intervals of the \NZ$-$\FAIL+\PEN{} differences in \cref{tab:e6full}, with the seed as the unit, are $[4.2,12.1]$ (held-out), $[5.9,16.8]$ (fixed), $[2.2,12.7]$ (MATH-500), $[2.4,21.4]$ (AIME24), $[4.6,14.6]$ (AIME25), and $[4.7,26.9]$ (AMC23) under truncated sampling, and $[-0.5,1.0]$, $[-7.5,6.3]$, $[-3.2,2.6]$, $[-6.3,8.4]$, $[-2.7,3.6]$, and $[-3.9,-0.5]$ under exact sampling; the fixed-set interaction is $+11.9$ ($[0.6,23.2]$, $p=0.04$).
Positions where the top token alone exceeds the top-$p$ threshold give a deterministic update but little drift, since the gap is $O(\varepsilon H)$ for $\pol_1=1-\varepsilon$; on 9B rollouts they are 60\% of all positions and carry 12\% of the gap.

\begin{figure}[htbp]
\centering
\includegraphics[width=\textwidth]{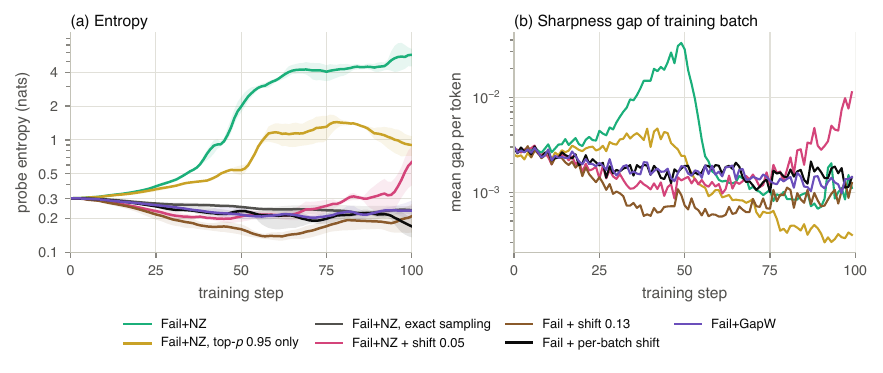}
\caption{100-step controls on 2B under top-$k$ 20 and top-$p$ 0.95 unless noted (mean over two or three seeds; \cref{tab:long}), with the truncated \FAIL+\NZ{} runs of \cref{fig:long}. \textbf{(a)} Probe-set entropy. \textbf{(b)} Mean gap per token of the training batch (none under exact sampling).}
\label{fig:controls}
\end{figure}

\paragraph{Controls and short runs.}
Under top-$p$ 0.95 alone the batch gap rises from $2.4\times10^{-3}$ to $5.2\times10^{-3}$ by steps 39--42, entropy peaks at 1.40 and 1.68 at steps 83 and 76, and it ends at 0.72 and 1.08; with top-$k$ 20 and top-$p$ 0.95 the gap rises to $(4$--$7)\times10^{-2}$ and entropy to 4.5--7.3.
With the fixed shift, the gap rises from $2.7\times10^{-3}$ to $1.7\times10^{-2}$ in seed 0 and to $6.4\times10^{-3}$ in seed 1 by the end of the run.
Over 15 steps under truncated sampling, \GW{} has slopes $-1.80$, $-0.83$, and $-1.84$ ($\net=+0.04$ over 15 steps, $+0.05$ over 100); under sequence-mean aggregation, \FAIL{} has $+1.89$ and $+2.09$ ($\net=0$; $+11.8$ and $+13.0$ with prompt-mean aggregation, predicted before the runs to fall to the level of \FAIL+\NZ{}) and \MASK{} $-8.93$ and $-9.89$ ($\net=+0.33$, $+0.30$).
With standard-deviation normalization, \MASK{} has $-9.2$ and $-8.9$ and \FAIL{} $+11.0$ and $+8.3$ under truncation, against $-1.7$, $+0.1$ and $-1.8$, $-1.2$ under exact sampling.
With four clipped minibatch updates per batch, where the old policy is fixed at the start of the batch so that ratios depart from one after the first update (largest log-ratio about 0.5), \MASK{} has $-15.2$ and $-14.4$ and \FAIL{} $+123.8$ and $+140.0$ under truncation, against $-3.7$, $-4.6$ and $-3.1$, $-1.3$ under exact sampling.
Shifting the advantages of \MASK{} by $-\lambda\net_g$ with $\lambda=0.5$, $1$, $1.5$ gives $\net=+0.15$, $0$, $-0.24$ (means over 15 steps; at $\lambda=1.5$ the first step has $-0.19=-0.5\times0.38$, and $|\net|$ grows as more responses go unresolved, from 41--43\% to 70--88\%) and slopes $-8.6$ and $-8.5$, $+0.7$ and $+0.2$, $+17.5$ and $+17.1$ ($-9.8$ and $-9.7$ at $\lambda=0$); against the line of \cref{fig:drift}, fitted before these runs, the residuals range from $-3.9$ to $+0.5$.
With SGD instead of Adam, at learning rate 0.007, chosen so that the 15-step slope of \MASK{} matches Adam's, \MASK{} has slopes $-8.8$ and $-7.9$ ($\net=+0.34$ and $+0.31$, logged drift $-58$ and $-49\times10^{-5}$) and \MASK{} shifted with $\lambda=0.5$ has $-4.3$ ($\net=+0.15$, drift $-31\times10^{-5}$, one seed), so the response is proportional to the drift; learning rates of 0.03 and 0.1 are already nonlinear in a single step, and a single step at 4 or more raises probe entropy by 5--9 nats.

\paragraph{Second model family.}
On OLMo-3-7B-Instruct we grade the last boxed answer, because strict grading counts 34\% of responses as format failures, use a 2{,}048-token limit, and probe entropy on 64 of its own responses to held-out prompts. The \MASK$-$\FAIL{} slope difference is $-2.7$ and $-2.9\times10^{-3}$ nats per step under truncation and $+0.1$ and $0.0$ under exact sampling, about one eighth of the $-21.3$ of Qwen3.5-2B, although the sharpness gap ($2.6\times10^{-3}$) and the logged drift are similar in size. OLMo's probe entropy moves less in the \MASK{} and \FAIL{} arms (0.01--0.03 nats over 15 steps, against 0.13--0.18 for Qwen3.5-2B).

\paragraph{Long runs.}
\Cref{tab:long} lists the 100-step runs of \cref{sec:exp-long} by seed. A node failure cut the first runs of seeds 0 and 1 of \FAIL{}, \FAIL+\NZ{}, \MASK{}, and the renormalized learner at steps 43--78; we reran them from scratch, and the reruns replace them; up to the cut they agree with the cut runs, crossing 0.5 nats within three steps of them.
\GW{} uses the gap that the learner computes for each sampled response under the sampling rule before the update; for \GW{} the logged $D_{\mathrm{net}}+D_{\mathrm{cov}}$ is zero at every step.
The surprisal-weighted variant replaces $\bar\gap_i$ in the weights by the response's mean negative log-probability under the sampler's raw log-probabilities, $\ell_i=-\frac1{L_i}\sum_t\log\pol(y_{i,t}\mid s_{i,t})$, so that $b_g=\sum_i\omega_iL_i\ell_ir_i/\sum_i\omega_iL_i\ell_i$.
The two runs with a fixed shift of $+0.13$ ran on single H200 GPUs (the same Hopper architecture) after the original node was reclaimed; runs of \FAIL+\NZ{} with a shift of $+0.085$ and of the kicked \GW{} to 100 steps were cut at steps 38--42 by the same reclaim and follow the corresponding reported runs up to that point.
The per-batch shift adds to every advantage the constant $c_t=-\sum_i\omega_iL_iA_i\bar\gap_i/\sum_i\omega_iL_i\bar\gap_i$, summed over the whole batch (mean $+0.12$ to $+0.14$ over the run).
The kick divides the scale $1+w$ of the final RMSNorm by 1.2, which divides every logit by 1.2; the sampler loads the same weights, and we checked that the logits change by the factor $1/1.2$ to within $0.5\%$.
On the 15-step truncated batches, $\bar\gap_i$ and $\ell_i$ have correlation 0.97--0.98 across responses and 0.91--0.93 within groups, and recomputing the drift with these weights removes 96--99\% of the gap covariance left by \NZ{}; in the long surprisal-weighted runs the logged $D_{\mathrm{net}}+D_{\mathrm{cov}}$ is 5--15\% of $D_{\mathrm{cov}}$ (means over 25-step windows).

\begin{table}[htbp]
\caption{Long 2B runs, by seed, under truncated sampling unless noted. Entropy: probe-set entropy at the last step (initial 0.303; kicked runs 0.402). Held-out accuracy at steps 25 / 50 / 75 / 100 (initial 44--47\%; kicked runs: steps 25 / 50, initial 41--43\%). Peak entropy of the truncated \FAIL+\NZ{} runs: 4.53, 6.45, and 7.30. \FAIL{} runs stop at step 50, once their entropy has saturated.}
\label{tab:long}
\vskip 0.05in
\centering
\footnotesize
\begin{tabular}{@{}llrrl@{}}
\toprule
Rule & Seed & Last step & Entropy & Held-out accuracy (\%) \\
\midrule
\FAIL{} & 0 / 1 / 2 & 50 & 4.44 / 4.95 / 5.52 & 26.0 / 0.0;\; 28.0 / 0.0;\; 30.3 / 0.0 \\
\FAIL+\NZ{} & 0 / 1 & 100 / 100 & 4.50 / 6.45 & 56.0 / 31.0 / 36.7 / 37.3;\; 56.3 / 28.0 / 26.0 / 16.0 \\
 & 2 & 100 & 6.21 & 54.7 / 10.0 / 29.0 / 13.3 \\
\MASK{} & 0 / 1 & 100 / 100 & 0.070 / 0.072 & 66.3 / 68.7 / 73.0 / 65.0;\; 66.3 / 65.0 / 65.3 / 60.7 \\
 & 2 & 100 & 0.068 & 61.3 / 61.7 / 65.7 / 68.7 \\
Renormalized & 0 / 1 & 100 / 100 & 0.262 / 0.213 & 58.0 / 61.3 / 67.7 / 69.0;\; 53.7 / 62.0 / 68.7 / 69.3 \\
\FAIL+\NZ{}, exact & 0 / 1 & 100 / 100 & 0.259 / 0.217 & 61.7 / 66.3 / 63.7 / 71.3;\; 55.0 / 65.0 / 70.7 / 68.0 \\
\FAIL+\NZ{}, top-$p$ 0.95 & 0 / 1 & 100 / 100 & 0.723 / 1.078 & 56.7 / 63.3 / 57.0 / 64.3;\; 49.7 / 62.0 / 50.3 / 56.7 \\
\FAIL+\NZ{}, shift $+0.05$ & 0 / 1 & 100 / 100 & 0.873 / 0.395 & 63.7 / 67.0 / 68.0 / 33.3;\; 60.7 / 66.7 / 68.0 / 64.0 \\
\FAIL+\GW{} & 0 & 100 & 0.216 & 59.0 / 65.3 / 70.0 / 70.0 \\
 & 1 & 100 & 0.202 & 63.3 / 70.0 / 67.3 / 71.3 \\
 & 2 & 100 & 0.288 & 65.3 / 66.7 / 71.0 / 64.7 \\
Surprisal-weighted & 0 & 100 & 0.255 & 60.0 / 61.3 / 67.7 / 70.0 \\
 & 1 & 100 & 0.175 & 61.3 / 48.7 / 65.3 / 69.0 \\
 & 2 & 100 & 0.408 & 63.0 / 68.0 / 65.7 / 63.3 \\
\FAIL{} + shift $+0.13$ (H200) & 0 / 1 & 100 / 100 & 0.150 / 0.269 & 62.0 / 66.3 / 67.3 / 70.7;\; 63.7 / 66.7 / 68.0 / 67.7 \\
\FAIL{} + per-batch shift & 0 & 100 & 0.172 & 59.7 / 64.3 / 71.7 / 71.7 \\
 & 1 & 100 & 0.136 & 62.3 / 67.0 / 71.0 / 69.3 \\
 & 2 & 100 & 0.204 & 56.7 / 68.3 / 66.0 / 70.3 \\
\midrule
Kicked, \FAIL+\NZ{} & 0 / 1 & 50 / 50 & 5.37 / 6.01 & 35.7 / 19.7;\; 50.3 / 21.3 \\
Kicked, \FAIL+\NZ{}, exact & 0 / 1 & 50 / 50 & 0.311 / 0.298 & 58.3 / 64.0;\; 61.0 / 68.0 \\
Kicked, \FAIL+\GW{} & 0 / 1 & 50 / 50 & 0.351 / 0.319 & 62.7 / 67.0;\; 61.0 / 56.0 \\
\bottomrule
\end{tabular}
\end{table}

\section{Survey Details}
\label{app:survey}

We read source code at the commits listed below, together with each project's documentation, issues, and pull requests, without running any of it.
The inference engines are vLLM, SGLang, and Hugging Face transformers.
A pipeline is counted as corrected only if it samples without truncation, renormalizes the learner over the sampled support, or centers the score on it; importance-sampling corrections do not count (\cref{prop:fixes}d).
For temperature, a learner that divides logits by the sampling temperature has no mismatch.

\begin{table}[htbp]
\caption{RL frameworks at the listed commits: default rollout sampler, whether the learner applies the sampling temperature, and correction for truncation.}
\label{tab:survey}
\centering
\footnotesize
\setlength{\tabcolsep}{4pt}
\begin{tabular}{@{}p{0.25\textwidth}p{0.2\textwidth}p{0.15\textwidth}p{0.34\textwidth}@{}}
\toprule
Framework (commit) & Default $T$ / top-$p$ / top-$k$ & Learner divides by $T$ & Truncation correction \\
\midrule
verl (d430157) & 1 / 1 / off & yes & score centering on support, optional (off) \\
OpenRLHF (dc2a7ad) & 1 / 1 / off & yes & none \\
TRL GRPOTrainer (e2dc408) & 1 / 1 / off & yes & none \\
TRL AsyncGRPOTrainer (e2dc408) & 1 / 1 / off & yes & mask replay, automatic (top-$k$/top-$p$) \\
Unsloth (71d24f7) & 1 / 1 / off & yes & recomputed support, optional (off) \\
NeMo-RL (f0639c6) & 1 / 1 / off & yes & recomputed support, automatic (no min-$p$) \\
open-instruct (94df04d) & 1 / 1 / -- & yes & none \\
slime (18868a7) & 1 / 1 / off & yes & mask replay, automatic for top-$p$ only \\
Megatron-LM RL (b291235) & 1 / off / inherited$^\dagger$ & no & none \\
ROLL (d3506ea) & 0.95 / 0.7 / 50 & no & none \\
AReaL (f31bd36) & 1 / 1 / off & yes & none \\
prime-rl (1999806) & 1 / 1 / off & yes & mask replay, automatic; min-$p$ disallowed \\
SkyRL (cb3b450) & 1 / 1 / off & yes & mask replay, optional (off) \\
torchtitan RL (4103c72) & 0.8 / top-$p<1$ rejected / -- & yes (since 2026-10) & truncation disallowed \\
torchforge (b32f06b) & 1 / 1 / engine default & no & none \\
tinker-cookbook (54b3cb2) & 1 / 1 / off & not determined & none \\
OpenPipe ART (5f0c1fb) & user-defined & no & none \\
Atropos (296a0bc) & environment-defined & no & none \\
rLLM (4e828c8) & 1 / 1 / off & yes (verl backend) & none \\
\bottomrule
\end{tabular}
\\[2pt]{\footnotesize $^\dagger$The RL client omits top-$k$, which the serving endpoint fills from the model's generation config (top-$k$ 20 for Qwen3); inferred from code.}
\end{table}

\paragraph{Reports and recipes.}
We collected 123 technical reports, open recipes, and papers that describe RLVR training of language models, and classified each by its training rollout sampler: (A) truncation without correction, (B) truncation with support renormalization or score centering, (C) no truncation and matched temperature, (D) temperature not applied by the learner, and (E) not disclosed.
Eighteen entries were collected specifically from frameworks with truncating defaults and are excluded from the prevalence count.
The remaining 105 entries split into 5 A, 2 B, 46 C, 2 D, and 50 E; most large-model technical reports are E, so the 13\% rate among the 55 disclosed entries describes public recipes, not production pipelines.
Commonly cited sampling settings of open models (e.g., top-$p$ 0.95, top-$k$ 20) are evaluation settings and were not counted as training settings.
Earlier releases had truncating defaults that the table, which uses current commits, does not show: recipes from TRL v0.16--0.17, EasyR1 v0.3, and Unsloth's GRPO notebooks truncate with top-$k$ 50, top-$p$ 0.99, or min-$p$ 0.1.

\section{Score Centering Equals Support Renormalization: Numerical Check}
\label{app:sc}

The identity in \cref{prop:fixes}c follows from the definitions; this check verifies our implementations.
We implemented score centering from its definition, $f(y)=\log\pol(y)-\sum_v\mathrm{sg}(q_v)\log\pol(v)$ with $\mathrm{sg}$ the stop-gradient, and support renormalization as renormalization over the kept set, and installed both on the same small model.
On 4 sequences of 300 tokens with kept sets of size 1 to 19, the logit gradient of $f$ equals $e_y-q$ to within $6\times10^{-8}$, and the parameter gradients of the two implementations differ by a relative $3\times10^{-6}$, the level of float32 rounding.

\section{Stale Rollouts and Clipping}
\label{app:stale}

Asynchronous pipelines sample from an older policy $\pol_{\mathrm{old}}$.
We added a rollout lag of $k$ steps to the tabular simulation of \cref{app:numerics}, here with SGD on the logit table (learning rate 50), and compared vanilla policy gradient with importance ratios against $\pol_{\mathrm{old}}$ and PPO clipping.
\begin{itemize}
\item Without importance ratios, a constant shift pulls the policy back toward $\pol_{\mathrm{old}}$ when $\net>0$ and pushes it away when $\net<0$, in 99.5--100\% of steps. Its entropy effect is feedback on the most recent entropy trend rather than an independent source, so it has no universal sign at first order; over many steps with a fixed behavior policy, a baseline above the behavior value instead drives the limit policy to a deterministic one~\citep{arnal2025asymre}.
\item Importance ratios with clipping remove the stale component: under exact sampling the dose-response slope of entropy on $\net$ is $-0.002$ to $+0.005$ for lags 2--32. They leave the truncation component intact: under truncated sampling the slope is $-0.290$, $-0.301$, and $-0.246$ for lags 2, 8, and 32, against $-0.297$ without lag.
\end{itemize}
Ratios against $\pol_{\mathrm{old}}$ correct for staleness because $\pol_{\mathrm{old}}$ is the distribution that generated the tokens up to truncation; they cannot correct truncation because $\pol_{\mathrm{old}}$ is not the sampler.

Clipping has a related effect when a batch is used for several updates.
With ratio $\rho=\pol/\pol_{\mathrm{old}}$ and clip range $\epsilon$, the expected gradient of a constant advantage $c$ is $c\,\pol(U)(q_U-\pol)$, where $U$ is the set of tokens not clipped for the sign of $c$; clipping therefore acts like a truncation of the support to $U$, and a constant shift again moves entropy, with a sign that depends on which tokens are clipped~\citep{park2025clip}.
This offers one route by which shifting all advantages, as positive-only versus negative-only reinforcement does, changes entropy in pipelines with several clipped updates per batch~\citep{zhu2025nsr}, whereas our single-update exact-sampling runs show almost no effect of a shift of the same size (\cref{tab:probes}).

%% file: main.bbl
\begin{thebibliography}{42}
\providecommand{\natexlab}[1]{#1}
\providecommand{\url}[1]{\texttt{#1}}
\expandafter\ifx\csname urlstyle\endcsname\relax
  \providecommand{\doi}[1]{doi: #1}\else
  \providecommand{\doi}{doi: \begingroup \urlstyle{rm}\Url}\fi

\bibitem[Arnal et~al.(2025)Arnal, Narozniak, Cabannes, Tang, Kempe, and
  Munos]{arnal2025asymre}
Arnal, C., Narozniak, G., Cabannes, V., Tang, Y., Kempe, J., and Munos, R.
\newblock Asymmetric {REINFORCE} for off-policy reinforcement learning:
  Balancing positive and negative rewards.
\newblock \emph{arXiv preprint arXiv:2506.20520}, 2025.

\bibitem[Cao et~al.(2026)Cao, Yuan, and Xie]{cao2026anatomy}
Cao, H., Yuan, Y., and Xie, T.
\newblock Anatomy of off-policy policy gradient: Importance sampling, {KL}
  regularization, and baselines.
\newblock \emph{arXiv preprint arXiv:2610.25725}, 2026.

\bibitem[Chung et~al.(2021)Chung, Thomas, Machado, and
  Le~Roux]{chung2021beyond}
Chung, W., Thomas, V., Machado, M.~C., and Le~Roux, N.
\newblock Beyond variance reduction: Understanding the true impact of baselines
  on policy optimization.
\newblock In \emph{Proceedings of the 38th International Conference on Machine
  Learning (ICML)}, 2021.

\bibitem[{Cognition}(2026)]{cognition2026swe17}
{Cognition}.
\newblock {SWE-1.7}: Frontier intelligence at a fraction of the cost.
\newblock \url{https://cognition.com/blog/swe-1-7}, 2026.
\newblock Blog post, July 8, 2026.

\bibitem[Cui et~al.(2025)Cui, Zhang, Chen, Yuan, Wang, Zuo, Li, Fan, Chen,
  Chen, et~al.]{cui2025entropy}
Cui, G., Zhang, Y., Chen, J., Yuan, L., Wang, Z., Zuo, Y., Li, H., Fan, Y.,
  Chen, H., Chen, W., et~al.
\newblock The entropy mechanism of reinforcement learning for reasoning
  language models.
\newblock \emph{arXiv preprint arXiv:2505.22617}, 2025.

\bibitem[Decugis et~al.(2026)Decugis, O'Brien, Bach, Synnaeve, and
  Cohen]{fade2026}
Decugis, J., O'Brien, S., Bach, F., Synnaeve, G., and Cohen, T.
\newblock Don't let gains {FADE}: Breaking down policy gradient weights in
  {RL}.
\newblock \emph{arXiv preprint arXiv:2607.01490}, 2026.

\bibitem[{DeepSeek-AI}(2025)]{deepseek2025v32}
{DeepSeek-AI}.
\newblock {DeepSeek-V3.2}: Pushing the frontier of open large language models.
\newblock \emph{arXiv preprint arXiv:2512.02556}, 2025.

\bibitem[Fang \& Khazi(2025)Fang and Khazi]{fang2025praxis}
Fang, Q. and Khazi, D.
\newblock Mismatch praxis: Rollout settings and {IS} corrections.
\newblock \url{https://www.paperinstruments.com/blog/mismatch-praxis}, 2025.
\newblock Paper Instruments blog, December 2, 2025.

\bibitem[Greensmith et~al.(2004)Greensmith, Bartlett, and
  Baxter]{greensmith2004variance}
Greensmith, E., Bartlett, P.~L., and Baxter, J.
\newblock Variance reduction techniques for gradient estimates in reinforcement
  learning.
\newblock \emph{Journal of Machine Learning Research}, 5:\penalty0 1471--1530,
  2004.

\bibitem[Han \& Hu(2026)Han and Hu]{han2026baselines}
Han, S. and Hu, Z.
\newblock Baselines and score centering in policy gradient estimation.
\newblock \emph{arXiv preprint arXiv:2610.22263}, 2026.

\bibitem[Hao et~al.(2025)Hao, Dong, Wu, Huang, Chi, and Wei]{hao2025opo}
Hao, Y., Dong, L., Wu, X., Huang, S., Chi, Z., and Wei, F.
\newblock On-policy {RL} with optimal reward baseline.
\newblock \emph{arXiv preprint arXiv:2505.23585}, 2025.

\bibitem[Hendrycks et~al.(2021)Hendrycks, Burns, Kadavath, Arora, Basart, Tang,
  Song, and Steinhardt]{hendrycks2021math}
Hendrycks, D., Burns, C., Kadavath, S., Arora, A., Basart, S., Tang, E., Song,
  D., and Steinhardt, J.
\newblock Measuring mathematical problem solving with the {MATH} dataset.
\newblock In \emph{Advances in Neural Information Processing Systems, Datasets
  and Benchmarks Track}, 2021.

\bibitem[{Hugging Face TRL contributors}(2026)]{trl7627}
{Hugging Face TRL contributors}.
\newblock {AsyncGRPO}: Renormalize log-probs over the top-$k$/top-$p$ candidate
  set.
\newblock \url{https://github.com/huggingface/trl/pull/7627}, 2026.
\newblock GitHub pull request, merged October 16, 2026.

\bibitem[Jiang(2026)]{modal2026mixed}
Jiang, N.
\newblock Toward {RL} on mixed-precision, mixed-accelerator samplers.
\newblock
  \url{https://modal.com/blog/toward-rl-on-mixed-precision-mixed-accelerator-samplers},
  2026.
\newblock Modal blog, October 13, 2026.

\bibitem[Kwon et~al.(2023)Kwon, Li, Zhuang, Sheng, Zheng, Yu, Gonzalez, Zhang,
  and Stoica]{kwon2023vllm}
Kwon, W., Li, Z., Zhuang, S., Sheng, Y., Zheng, L., Yu, C.~H., Gonzalez, J.~E.,
  Zhang, H., and Stoica, I.
\newblock Efficient memory management for large language model serving with
  {PagedAttention}.
\newblock In \emph{Proceedings of the 29th Symposium on Operating Systems
  Principles (SOSP)}, 2023.

\bibitem[Le~Roux(2016)]{leroux2016efficient}
Le~Roux, N.
\newblock Efficient iterative policy optimization.
\newblock \emph{arXiv preprint arXiv:1612.08967}, 2016.

\bibitem[Le~Roux et~al.(2025)Le~Roux, Bellemare, Lebensold, Bergeron, Greaves,
  Fr{\'e}chette, Pelletier, Thibodeau-Laufer, Toth, and Work]{leroux2025topr}
Le~Roux, N., Bellemare, M.~G., Lebensold, J., Bergeron, A., Greaves, J.,
  Fr{\'e}chette, A., Pelletier, C., Thibodeau-Laufer, E., Toth, S., and Work,
  S.
\newblock Tapered off-policy {REINFORCE}: Stable and efficient reinforcement
  learning for {LLMs}.
\newblock \emph{arXiv preprint arXiv:2503.14286}, 2025.

\bibitem[Li et~al.(2026{\natexlab{a}})Li, Wang, Ding, Lochab, Grama, and
  Zhang]{entrocraft2026}
Li, B., Wang, Y., Ding, Y., Lochab, A., Grama, A., and Zhang, R.
\newblock Addressing performance saturation for {LLM} {RL} via precise entropy
  curve control.
\newblock \emph{arXiv preprint arXiv:2604.26326}, 2026{\natexlab{a}}.

\bibitem[Li et~al.(2026{\natexlab{b}})Li, Xu, Li, Liu, Liu, Tong, Zheng, Xue,
  Zhang, Cai, Zhang, Liu, and Wang]{li2026otb}
Li, Y., Xu, J., Li, Z., Liu, J., Liu, W., Tong, Y., Zheng, L., Xue, Z., Zhang,
  Y., Cai, T., Zhang, G., Liu, Q., and Wang, B.
\newblock The optimal token baseline: Variance reduction for long-horizon
  {LLM-RL}.
\newblock In \emph{International Conference on Machine Learning},
  2026{\natexlab{b}}.
\newblock arXiv:2602.07078.

\bibitem[Liu et~al.(2025)Liu, Chen, Li, Qi, Pang, Du, Lee, and
  Lin]{liu2025drgrpo}
Liu, Z., Chen, C., Li, W., Qi, P., Pang, T., Du, C., Lee, W.~S., and Lin, M.
\newblock Understanding {R1-Zero}-like training: A critical perspective.
\newblock \emph{arXiv preprint arXiv:2503.20783}, 2025.

\bibitem[Luo et~al.(2026)Luo, Sun, Wu, Xu, Deng, Hu, and Tang]{stare2026}
Luo, H., Sun, Q., Wu, S., Xu, C., Deng, W., Hu, H., and Tang, Y.
\newblock {STARE}: Surprisal-guided token-level advantage reweighting for
  policy entropy stability.
\newblock \emph{arXiv preprint arXiv:2606.19236}, 2026.

\bibitem[Marek \& Ryabinin(2026)Marek and Ryabinin]{marek2026score}
Marek, M. and Ryabinin, M.
\newblock Score centering stabilizes off-policy reinforcement learning.
\newblock \emph{arXiv preprint arXiv:2609.20807}, 2026.

\bibitem[Pan(2026)]{pan2026cool}
Pan, T.
\newblock Cool the sampler, not the learner: Sampling temperature moves the
  staleness cliff of importance-corrected {GRPO}.
\newblock \emph{arXiv preprint arXiv:2609.36953}, 2026.

\bibitem[Park et~al.(2025)Park, Kim, Kim, Jo, Choi, Cho, and Ryu]{park2025clip}
Park, J.~R., Kim, J., Kim, G., Jo, J., Choi, S., Cho, J., and Ryu, E.~K.
\newblock Clip-low increases entropy and clip-high decreases entropy in
  reinforcement learning of large language models.
\newblock \emph{arXiv preprint arXiv:2509.26114}, 2025.

\bibitem[Petrenko et~al.(2026)Petrenko, Lipkin, Chen, Wijmans, Cusumano-Towner,
  Giryes, and Kr{\"a}henb{\"u}hl]{petrenko2026repo}
Petrenko, A., Lipkin, B., Chen, K., Wijmans, E., Cusumano-Towner, M., Giryes,
  R., and Kr{\"a}henb{\"u}hl, P.
\newblock Entropy-preserving reinforcement learning.
\newblock In \emph{International Conference on Learning Representations}, 2026.
\newblock arXiv:2603.11682.

\bibitem[{PyTorch torchtitan contributors}(2026)]{torchtitan5021}
{PyTorch torchtitan contributors}.
\newblock [rl] reject top\_p $<$ 1 until the trainer replays truncated
  sampling.
\newblock \url{https://github.com/pytorch/torchtitan/pull/5021}, 2026.
\newblock GitHub pull request, merged October 4, 2026.

\bibitem[Shao et~al.(2024)Shao, Wang, Zhu, Xu, Song, Bi, Zhang, Zhang, Li, Wu,
  and Guo]{shao2024deepseekmath}
Shao, Z., Wang, P., Zhu, Q., Xu, R., Song, J., Bi, X., Zhang, H., Zhang, M.,
  Li, Y.~K., Wu, Y., and Guo, D.
\newblock {DeepSeekMath}: Pushing the limits of mathematical reasoning in open
  language models.
\newblock \emph{arXiv preprint arXiv:2402.03300}, 2024.

\bibitem[{Team Olmo}(2025)]{olmo2025olmo3}
{Team Olmo}.
\newblock Olmo 3.
\newblock \emph{arXiv preprint arXiv:2512.13961}, 2025.

\bibitem[{The Microsoft AI Team}(2026)]{mai2026thinking}
{The Microsoft AI Team}.
\newblock {MAI-Thinking-1}: Building a hill-climbing machine.
\newblock Technical report, Microsoft AI, 2026.
\newblock \url{https://microsoft.ai/pdf/mai-thinking-1.pdf}.

\bibitem[Wang et~al.(2025)Wang, Li, Bai, Deng, Dong, Lan, and
  Wang]{scoperl2025}
Wang, C., Li, Z., Bai, J., Deng, H., Dong, G., Lan, G., and Wang, Y.
\newblock {SCOPE-RL}: Stable and quantitative control of policy entropy in {RL}
  post-training.
\newblock \emph{arXiv preprint arXiv:2510.08141}, 2025.

\bibitem[Wang et~al.(2026)Wang, Xie, Zhang, Sun, Chen, Li, and
  Zhang]{wang2026entropydyn}
Wang, S., Xie, Y., Zhang, W., Sun, Y., Chen, Y., Li, Y., and Zhang, Y.
\newblock On the entropy dynamics in reinforcement fine-tuning of large
  language models.
\newblock \emph{arXiv preprint arXiv:2602.03392}, 2026.

\bibitem[Williams(1992)]{williams1992reinforce}
Williams, R.~J.
\newblock Simple statistical gradient-following algorithms for connectionist
  reinforcement learning.
\newblock \emph{Machine Learning}, 8\penalty0 (3--4):\penalty0 229--256, 1992.

\bibitem[Wu et~al.(2025)Wu, Huang, Wu, Zhang, Wang, and He]{qae2025}
Wu, J., Huang, K., Wu, J., Zhang, A., Wang, X., and He, X.
\newblock Quantile advantage estimation: Stabilizing {RLVR} for {LLM}
  reasoning.
\newblock \emph{arXiv preprint arXiv:2509.22611}, 2025.

\bibitem[{Xiaomi LLM-Core Team}(2026)]{xiaomi2026mimo}
{Xiaomi LLM-Core Team}.
\newblock {MiMo-V2.6}: Scaling reinforcement learning towards self-improvement.
\newblock \emph{arXiv preprint arXiv:2610.11959}, 2026.

\bibitem[Yang et~al.(2025)Yang, Xu, Chen, Liu, Lyu, Lin, Ye, and
  Yang]{yang2025entropic}
Yang, K., Xu, X., Chen, Y., Liu, W., Lyu, J., Lin, Z., Ye, D., and Yang, S.
\newblock {EntroPIC}: Towards stable long-term training of {LLMs} via entropy
  stabilization with proportional-integral control.
\newblock \emph{arXiv preprint arXiv:2511.15248}, 2025.

\bibitem[Yao et~al.(2025)Yao, Liu, Zhang, Dong, Shang, and Gao]{yao2025tis}
Yao, F., Liu, L., Zhang, D., Dong, C., Shang, J., and Gao, J.
\newblock Your efficient {RL} framework secretly brings you off-policy {RL}
  training.
\newblock \url{https://fengyao.notion.site/off-policy-rl}, 2025.
\newblock Blog post.

\bibitem[Yu et~al.(2025)Yu, Zhang, Zhu, Yuan, Zuo, Yue, Dai, Fan, Liu, Liu,
  et~al.]{yu2025dapo}
Yu, Q., Zhang, Z., Zhu, R., Yuan, Y., Zuo, X., Yue, Y., Dai, W., Fan, T., Liu,
  G., Liu, L., et~al.
\newblock {DAPO}: An open-source {LLM} reinforcement learning system at scale.
\newblock \emph{arXiv preprint arXiv:2503.14476}, 2025.

\bibitem[Zeng et~al.(2026)Zeng, Huang, Yin, Liu, Li, Li, Wu, Zheng, Zhang,
  Huang, and Qiu]{balanced2026}
Zeng, Z., Huang, J., Yin, Z., Liu, J., Li, Z., Li, B., Wu, Y., Zheng, Y.,
  Zhang, G., Huang, W., and Qiu, X.
\newblock Balanced aggregation: Understanding and fixing aggregation bias in
  {GRPO}.
\newblock \emph{arXiv preprint arXiv:2605.04077}, 2026.

\bibitem[Zhang et~al.(2026)Zhang, Fu, Shen, Zhao, Zhang, Xi,
  et~al.]{zhang2026polarity}
Zhang, J., Fu, Z., Shen, J., Zhao, Y., Zhang, Y., Xi, Z., et~al.
\newblock Entropy polarity in reinforcement fine-tuning: Direction, asymmetry,
  and control.
\newblock \emph{arXiv preprint arXiv:2605.11775}, 2026.

\bibitem[Zhang et~al.(2025)Zhang, Yuan, Huang, You, Hu, Ruan, Jian, Chen, and
  Hu]{zhang2025aer}
Zhang, X., Yuan, X., Huang, D., You, W., Hu, C., Ruan, J., Jian, A., Chen, K.,
  and Hu, X.
\newblock Revisiting entropy regularization: Adaptive coefficient unlocks its
  potential for {LLM} reinforcement learning.
\newblock \emph{arXiv preprint arXiv:2510.10959}, 2025.

\bibitem[Zheng et~al.(2025)Zheng, Liu, Li, Chen, Yu, Gao, Dang, Liu, Men, Yang,
  Zhou, and Lin]{zheng2025gspo}
Zheng, C., Liu, S., Li, M., Chen, X.-H., Yu, B., Gao, C., Dang, K., Liu, Y.,
  Men, R., Yang, A., Zhou, J., and Lin, J.
\newblock Group sequence policy optimization.
\newblock \emph{arXiv preprint arXiv:2507.18071}, 2025.

\bibitem[Zhu et~al.(2025)Zhu, Xia, Wei, Chen, Chen, and Meng]{zhu2025nsr}
Zhu, X., Xia, M., Wei, Z., Chen, W.-L., Chen, D., and Meng, Y.
\newblock The surprising effectiveness of negative reinforcement in {LLM}
  reasoning.
\newblock In \emph{Advances in Neural Information Processing Systems
  (NeurIPS)}, 2025.
\newblock arXiv:2506.01347.

\end{thebibliography}
